\renewcommand{\paragraph}[1]{\noindent\textbf{#1}}
\theoremstyle{plain}
\newtheorem{theorem}{Theorem}[section]
\newtheorem{corollary}[theorem]{Corollary}
\newtheorem{condition}[theorem]{Condition}
\newtheorem{case}[theorem]{Case}
\theoremstyle{definition}
\newtheorem{definition}[theorem]{Definition}
\theoremstyle{remark}
\providecommand{\0}{\mathbf{0}}
\renewcommand{\aa}{\mathbf{a}}
\providecommand{\bb}{\mathbf{b}}
\providecommand{\cc}{\mathbf{c}}
\providecommand{\ee}{\mathbf{e}}
\renewcommand{\gg}{\mathbf{g}}
\providecommand{\nn}{\mathbf{n}}
\providecommand{\oo}{\mathbf{o}}
\renewcommand{\ss}{\mathbf{s}}
\providecommand{\vv}{\mathbf{v}}
\providecommand{\xx}{\mathbf{x}}
\providecommand{\zz}{\mathbf{z}}
\providecommand{\cC}{\mathcal{C}}
\providecommand{\cG}{\mathcal{G}}
\providecommand{\cI}{\mathcal{I}}
\providecommand{\cN}{\mathcal{N}}
\providecommand{\cO}{\mathcal{O}}
\providecommand{\cS}{\mathcal{S}}
\providecommand{\cT}{\mathcal{T}}
\providecommand{\cV}{\mathcal{V}}
\providecommand{\cX}{\mathcal{X}}
\providecommand{\cZ}{\mathcal{Z}}
\providecommand{\R}{\mathbb{R}} 
\providecommand{\mA}{\mathbf{A}}
\providecommand{\mE}{\mathbf{E}}
\providecommand{\mG}{\mathbf{G}}
\providecommand{\mH}{\mathbf{H}}
\providecommand{\mI}{\mathbf{I}}
\providecommand{\mJ}{\mathbf{J}}
\providecommand{\mM}{\mathbf{M}}
\providecommand{\mR}{\mathbf{R}}
\providecommand{\mT}{\mathbf{T}}
\providecommand{\mX}{\mathbf{X}}
\providecommand{\mY}{\mathbf{Y}}
\providecommand{\mZ}{\mathbf{Z}}
\providecommand{\ind}{\perp\!\!\!\!\perp}
\newcommand{\abs}[1]{\left\lvert#1\right\rvert}
    \newcommand*{\algrule}[1][\algorithmicindent]{\makebox[#1][l]{\hspace*{.5em}\thealgruleextra\vrule height \thealgruleheight depth \thealgruledepth}}%
\newcommand*{\thealgruleextra}{}
\newcommand*{\thealgruleheight}{.75\baselineskip}
\newcommand*{\thealgruledepth}{.25\baselineskip}
\def\ALG@printindent{%
    \ifnum \theALG@nested>0
        \ifx\ALG@text\ALG@x@notext
        \else
            \unskip
            \addvspace{-1pt}
            \ALG@printindent@tempcnta=1
            \loop
                \algrule[\csname ALG@ind@\the\ALG@printindent@tempcnta\endcsname]%
                \advance \ALG@printindent@tempcnta 1
            \ifnum \ALG@printindent@tempcnta<\numexpr\theALG@nested+1\relax
            \repeat
        \fi
    \fi
    }%
\patchcmd{\ALG@doentity}{\noindent\hskip\ALG@tlm}{\ALG@printindent}{}{\errmessage{failed to patch}}
\newbox\statebox
\newcommand{\myState}[1]{%
    \setbox\statebox=\vbox{#1}%
    \edef\thealgruleheight{\dimexpr \the\ht\statebox+1pt\relax}%
    \edef\thealgruledepth{\dimexpr \the\dp\statebox+1pt\relax}%
    \ifdim\thealgruleheight<.75\baselineskip
        \def\thealgruleheight{\dimexpr .75\baselineskip+1pt\relax}%
    \fi
    \ifdim\thealgruledepth<.25\baselineskip
        \def\thealgruledepth{\dimexpr .25\baselineskip+1pt\relax}%
    \fi
    \State #1%
    \def\thealgruleheight{\dimexpr .75\baselineskip+1pt\relax}%
    \def\thealgruledepth{\dimexpr .25\baselineskip+1pt\relax}%
}
\newcommand{\lacl}{\color{brown}}
\title{Identification of Nonlinear Latent Hierarchical Models}
\author[1]{\textbf{Lingjing Kong}} 
\author[2]{\textbf{Biwei Huang}}
\author[3]{\textbf{Feng Xie}}
\author[1,4]{\textbf{Eric Xing}}
\author[1]{\textbf{Yuejie Chi}} 
\author[1,4]{\textbf{Kun Zhang}}
\affil[1]{ Carnegie Mellon University}
\affil[2]{University of California San Diego}
\affil[3]{Beijing Technology and Business University}
\affil[4]{Mohamed bin Zayed University of Artificial Intelligence}
\begin{document}

\maketitle

\begin{abstract}
    Identifying latent variables and causal structures from observational data is essential to many real-world applications involving biological data, medical data, and unstructured data such as images and languages.
    However, this task can be highly challenging, especially when observed variables are generated by causally related latent variables and the relationships are nonlinear.
    In this work, we investigate the identification problem for nonlinear latent hierarchical causal models in which observed variables are generated by a set of causally related latent variables, and some latent variables may not have observed children.
    We show that the identifiability of causal structures and latent variables (up to invertible transformations) can be achieved under mild assumptions:
    on causal structures, we allow for multiple paths between any pair of variables in the graph, which relaxes latent tree assumptions in prior work;
    on structural functions, we permit general nonlinearity and multi-dimensional continuous variables, alleviating existing work's parametric assumptions.
    Specifically, we first develop an identification criterion in the form of novel identifiability guarantees for an elementary latent variable model. 
    Leveraging this criterion, we show that \textit{both causal structures and latent variables} of the hierarchical model can be identified asymptotically by explicitly constructing an estimation procedure.
    To the best of our knowledge, our work is the first to establish identifiability guarantees for both causal structures and latent variables in nonlinear latent hierarchical models.
\end{abstract}

\section{Introduction}
Classical causal structure learning algorithms often assume no latent confounders. 
However, it is usually impossible to enumerate and measure all task-related variables in real-world scenarios. 
Neglecting latent confounders may lead to spurious correlations among observed variables. 
Hence, much effort has been devoted to handling the confounding problem. 
For instance, Fast Casual Inference and its variants \citep{spirtes2000causation,Pearl:2000:CMR:331969,colombo2012learning,Akbari2021} leverage conditional independence constraints to locate possible latent confounders and estimate causal structures among observed variables, assuming no causal relationships among latent variables. 
This line of approaches ends up with an equivalence class, which usually consists of many directed acyclic graphs (DAGs). 

Later, several approaches have been proposed to tackle direct causal relations among latent variables with observational data.
These approaches are built upon principles such as rank constraints \citep{Silva-linearlvModel, Kummerfeld2016,huang2022latent}, matrix decomposition \citep{RankSparsity_11, RankSparsity_12, anandkumar2013learning}, high-order moments \citep{shimizu2009estimation,cai2019triad,xie2020generalized,adams2021identification,chen2022identification}, copula models~\citep{cui2018learning}, and mixture oracles~\citep{kivva2021learning}. \citet{Pearl88,zhang2004hierarchical,choi2011learning,drton2017marginal,zhou2020learning,huang2020guaranteed} extend such approaches to handle tree structures where only one path is permitted between any pair of variables. 
Recently, \citet{huang2022latent} propose to use rank-deficiency constraints to identify more general latent hierarchical structures. 
A common assumption behind these approaches is that either the causal relationships should be linear or the variables should be discrete. 
However, the linearity and discrete-variable assumptions are rather restrictive in real-world scenarios. 
Moreover, these approaches only focus on structure learning without identifying the latent variables and corresponding causal models.
We defer a detailed literature review to Appendix~\ref{sec:related_work}.

In this work, we identify the hierarchical graph structure and latent variables simultaneously for general nonlinear latent hierarchical causal models.
Specifically, we first develop novel identifiability guarantees on a specific latent variable model which we refer to as the basis model (Figure~\ref{eq:basis_model}).
We then draw connections between the basis-model identifiability and the identification of nonlinear latent hierarchical models.
Leveraging this connection, we develop an algorithm to localize and estimate latent variables and simultaneously learn causal structures underlying the entire system.
We show the correctness of the proposed algorithm and thus obtain identifiability of both \textit{latent hierarchical causal structures} and \textit{latent variables} for nonlinear latent hierarchical models. 
In sum, our main contributions are as follows.
\vspace{-.5em}
\begin{itemize}[leftmargin=1em, topsep=1pt]
    \setlength\itemsep{-0.1em}
    \item Analogous to independence tests in PC algorithm~\citep{spirtes2000causation} and rank-deficiency tests in \citet{huang2022latent}, we develop a novel identifiability theory (Theorem~\ref{thm:sparsity}) as a fundamental criterion for locating and identifying latent variables in general nonlinear causal models.
    \item  We show structure identification guarantees for latent hierarchical models admitting continuous multi-dimensional (c.f., one-dimensional and often discrete~\citep{Pearl88,choi2011learning,huang2022latent,xie2022identification}) variables, general nonlinear (c.f., linear~\citep{Pearl88,huang2022latent,xie2022identification}) structural functions, and general graph structures (c.f., generalized trees~\citep{Pearl88,choi2011learning,huang2022latent}).
    \item Under the same conditions, we provide identification guarantees for latent variables up to invertible transformations, which, to the best of our knowledge, is the first attempt in cases of nonlinear hierarchical models.
    \item We accompany our theory with an estimation method that can asymptotically identify the causal structure and latent variables for nonlinear latent hierarchical models and validate it on multiple synthetic and real-world datasets.
\end{itemize}

\vspace{-1mm}
\begin{figure*}[t]
    \centering
     \setlength{\abovecaptionskip}{2pt}
     \setlength{\belowcaptionskip}{-4pt}
    \begin{subfigure}{0.3\textwidth}
        \centering
        \begin{tikzpicture}[scale=.63, line width=0.4pt, inner sep=0.1mm, shorten >=.1pt, shorten <=.1pt]
            \draw (0, 2.5) node[circle, draw=gray!30, thick, inner sep=0pt, minimum size=18pt](z1)  {{\footnotesize\lacl\,{$\zz_1$}\,}};
            \draw (-1, 1) node[circle, draw=gray!30, thick, inner sep=0pt, minimum size=10pt](z2) {{\footnotesize\lacl\,{$\zz_2$}\,}};
            \draw (0, 1) node[circle, draw=gray!30, thick, inner sep=0pt, minimum size=12pt](x8)  {{\footnotesize\,$\xx_8$\,}};
            \draw (1, 1) node[circle, draw=gray!30, thick, inner sep=0pt, minimum size=16pt](z3)  {{\footnotesize\lacl\,$\zz_3$\,}};
            
            \draw (-3, -0.5) node[circle, draw=gray!30, thick, inner sep=0pt, minimum size=7pt](z4)  {{\footnotesize\lacl\,$\zz_4$\,}};
            \draw (-2, -0.5) node[circle, draw=gray!30, thick, inner sep=0pt, minimum size=5pt](x4)  {{\footnotesize\,$\xx_4$\,}};
            \draw (-1.2, -0.5) node[circle, draw=gray!30, thick, inner sep=0pt, minimum size=8pt](z5)  {{\footnotesize\lacl\,$\zz_5$\,}};

            \draw (1.2, -0.5) node[circle, draw=gray!30, thick, inner sep=0pt, minimum size=6pt](z6)  {{\footnotesize\lacl\,$\zz_6$\,}};
            \draw (2, -0.5) node[circle, draw=gray!30, thick, inner sep=0pt, minimum size=16pt](x12)  {{\footnotesize\,$\xx_{12}$\,}};
            \draw (3, -0.5) node[circle, draw=gray!30, thick, inner sep=0pt, minimum size=12pt](z7)  {{\footnotesize\lacl\,$\zz_{7}$\,}};

            \draw (-5, -2) node[circle, draw=gray!30, thick, inner sep=0pt, minimum size=9pt](x1)  {{\footnotesize\,$\xx_1$\,}};
            \draw (-4, -2) node[circle, draw=gray!30, thick, inner sep=0pt, minimum size=9pt](x2)  {{\footnotesize\,$\xx_2$\,}};
            \draw (-3, -2) node[circle, draw=gray!30, thick, inner sep=0pt, minimum size=6pt](x3)  {{\footnotesize\,$\xx_3$\,}};
            \draw (-2.2, -2) node[circle, draw=gray!30, thick, inner sep=0pt, minimum size=12pt](x5)  {{\footnotesize\,$\xx_5$\,}};
            \draw (-1.325, -2) node[circle, draw=gray!30, thick, inner sep=0pt, minimum size=6pt](x6)  {{\footnotesize\,$\xx_6$\,}};
            \draw (-0.5, -2) node[circle, draw=gray!30, thick, inner sep=0pt, minimum size=9pt](x7)  {{\footnotesize\,$\xx_7$\,}};
            \draw (0.3, -2) node[circle, draw=gray!30, thick, inner sep=0pt, minimum size=3pt](x9)  {{\footnotesize\,$\xx_9$\,}};
            \draw (1.2, -2) node[circle, draw=gray!30, thick, inner sep=0pt, minimum size=3pt](x10)  {{\footnotesize\,$\xx_{10}$\,}};
            \draw (2.2, -2) node[circle, draw=gray!30, thick, inner sep=0pt, minimum size=16pt](x11)  {{\footnotesize\,$\xx_{11}$\,}};
            \draw (3.225, -2) node[circle, draw=gray!30, thick, inner sep=0pt, minimum size=9pt](x13)  {{\footnotesize\,$\xx_{13}$\,}};
            \draw (4.3, -2) node[circle, draw=gray!30, thick, inner sep=0pt, minimum size=5pt](x14)  {{\footnotesize\,$\xx_{14}$\,}};
            \draw (5.25, -2) node[circle, draw=gray!30, thick, inner sep=0pt, minimum size=12pt](x15)  {{\footnotesize\,$\xx_{15}$\,}};
            
            
           \draw[-latex] (z1) -- (z2);
           \draw[-latex] (z1) -- (x8);
           \draw[-latex] (z1) -- (z3);
           \draw[-latex] (z2) -- (z4);
           \draw[-latex] (z2) -- (x4);
           \draw[-latex] (z2) -- (z5);
           \draw[-latex] (z3) -- (z6);
           \draw[-latex] (z3) -- (x12);
           \draw[-latex] (z3) -- (z7);
           \draw[-latex] (z4) -- (x1);
           \draw[-latex] (z4) -- (x2);
           \draw[-latex] (z4) -- (x3);
           \draw[-latex] (z5) -- (x5);
           \draw[-latex] (z5) -- (x6);
           \draw[-latex] (z5) -- (x7);
           \draw[-latex] (z6) -- (x7);
           \draw[-latex] (z6) -- (x9);
           \draw[-latex] (z6) -- (x10);
           \draw[-latex] (z6) -- (x11);
           \draw[-latex] (z7) -- (x13);
           \draw[-latex] (z7) -- (x14);
           \draw[-latex] (z7) -- (x15);
        \end{tikzpicture}
        \caption{}
        \label{subfig:example1}
    \end{subfigure}
    \hfill
    \begin{subfigure}{0.3\textwidth}
        \centering
        \begin{tikzpicture}[scale=0.63, line width=0.4pt, inner sep=0.2mm, shorten >=.1pt, shorten <=.1pt]
            \draw (0, 2.5) node[circle, draw=gray!30, thick, inner sep=0pt, minimum size=20pt](z1)  {{\footnotesize\lacl\,{$\zz_1$}\,}};
            \draw (-2, 1) node[circle, draw=gray!30, thick, inner sep=0pt, minimum size=18pt](z2) {{\footnotesize\lacl\,{$\zz_2$}\,}};
            \draw (2, 1) node[circle, draw=gray!30, thick, inner sep=0pt, minimum size=6pt](z3)  {{\footnotesize\lacl\,$\zz_3$\,}};
            
            \draw (-3, -0.5) node[circle, draw=gray!30, thick, inner sep=0pt, minimum size=19pt](x1)  {{\footnotesize\,$\xx_1$\,}};
            \draw (-2, -0.5) node[circle, draw=gray!30, thick, inner sep=0pt, minimum size=6pt](x2)  {{\footnotesize\,$\xx_2$\,}};
            \draw (-1.2, -0.5) node[circle, draw=gray!30, thick, inner sep=0pt, minimum size=12pt](x3)  {{\footnotesize\,$\xx_3$\,}};
            \draw (0, -0.5) node[circle, draw=gray!30, thick, inner sep=0pt, minimum size=16pt](z4)  {{\footnotesize\lacl\,$\zz_4$\,}};

            \draw (1.2, -0.5) node[circle, draw=gray!30, thick, inner sep=0pt, minimum size=6pt](x7)  {{\footnotesize\,$\xx_7$\,}};
            \draw (2, -0.5) node[circle, draw=gray!30, thick, inner sep=0pt, minimum size=3pt](x8)  {{\footnotesize\,$\xx_8$\,}};
            \draw (3, -0.5) node[circle, draw=gray!30, thick, inner sep=0pt, minimum size=5pt](x9)  {{\footnotesize\,$\xx_9$\,}};
            
            \draw (-1, -2) node[circle, draw=gray!30, thick, inner sep=0pt, minimum size=16pt](x10)  {{\footnotesize\,$\xx_{4}$\,}};
            \draw (0, -2) node[circle, draw=gray!30, thick, inner sep=0pt, minimum size=6pt](x11)  {{\footnotesize\,$\xx_{5}$\,}};
            \draw (1, -2) node[circle, draw=gray!30, thick, inner sep=0pt, minimum size=20pt](x12)  {{\footnotesize\,$\xx_{6}$\,}};

            
            
           \draw[-latex] (z1) -- (z2);
           \draw[-latex] (z1) -- (z3);

           \draw[-latex] (z2) -- (x1);
           \draw[-latex] (z2) -- (x2);
           \draw[-latex] (z2) -- (x3);
           \draw[-latex] (z2) -- (z4);

           \draw[-latex] (z3) -- (x7);
           \draw[-latex] (z3) -- (x8);
           \draw[-latex] (z3) -- (x9);
           \draw[-latex] (z3) -- (z4);

           \draw[-latex] (z4) -- (x10);
           \draw[-latex] (z4) -- (x11);
           \draw[-latex] (z4) -- (x12);
        \end{tikzpicture}
        \caption{}
        \label{subfig:example2}
    \end{subfigure}
    \caption{\footnotesize
        \textbf{Examples of latent hierarchical graphs satisfying    Conditions~\ref{condition:structural_conditions}}. 
        $\xx_{i}$ denotes observed variables and $\zz_{j}$ denotes latent variables. The node size represents the dimensionality of each variable.
        We note that our graph condition permits multiple paths between two latent variables (e.g., $\zz_{1}$ and $\zz_{4}$ in Figure~\ref{subfig:example2}), thus more general than tree structures.
    }
    \label{fig:hierarchical_examples}
    \vspace{-1em}
\end{figure*}
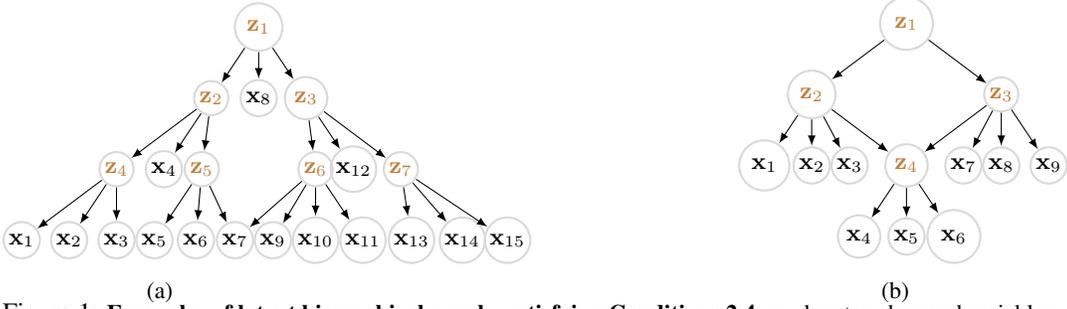

\vspace{-1mm}

\section{Nonlinear Latent Hierarchical Causal Models} \label{sec:formulation}
\vspace{-1mm}
In this section, we introduce notations and formally define the latent hierarchical causal model. 
We focus on causal models with directed acyclic graph (DAG) structures and denote the graph with $\mG $, which consists of the latent variable set $\mZ:= \{\zz_{1}, \dots, \zz_{m} \}$, the observed variable set \footnote{We refer to leaf variables in the graph as observed variables for ease of exposition. The theory in this work is also applicable when some non-leaf variables happen to be observed, which we discuss in Append~\ref{subsec:relax_noleaf}}  $\mX:= \{ \xx_{1}, \dots, \xx_{n} \} $, and the edge set $\mE$.
Each variable is a random vector comprising potentially multiple dimensions, i.e., $\xx_{i} \in \R^{d_{x_{i}}}$ and $ \zz_{j} \in \R^{d_{z_{j}}} $, where $ d_{x_{i}} $ and $ d_{z_{j}} $ stand for the dimensionalities of $\xx_{i} \in \mX $ and $ \zz_{j} \in \mZ $, respectively.
Both latent variables and observed variables are generated recursively by their latent parents:
\begin{equation} \label{eq:nonlinear_generation}
\setlength{\abovedisplayskip}{1pt}
\setlength{\belowdisplayskip}{1pt}
    \xx_{i} = g_{x_{i}} ( \text{Pa} (\xx_{i}), \bm\varepsilon_{x_{i}} ) \qquad
    \zz_{j} = g_{z_{j}} ( \text{Pa} (\zz_{j}), \bm\varepsilon_{z_{j}} ),
\end{equation}
where $ \text{Pa}(\zz_{j}) $ represents all the parent variables of $\zz_{j}$ and $ \bm\varepsilon_{z_{j}} $ represents the exogenous variable generating $ \zz_{j} $.
Identical definitions apply to those notations involving $\xx_{i}$. 
All exogenous variables $\bm\varepsilon_{x_{i}}$, $\bm\varepsilon_{z_{j}}$ are independent of each other and could also comprise multiple dimensions.
We now define a general latent hierarchical causal model with a causal graph $\mG: = (\mY, \mE) $ in Definition~\ref{condition:general_models}.

\begin{definition} [Latent Hierarchical Causal Models] \label{condition:general_models}{\ }
The variable set $\mY$ consists of observed variable set $\mX$ and latent variable set $\mZ$, and each variable is generated by Equation~\ref{eq:nonlinear_generation}.
\end{definition}
In light of the given definitions, we formally state the objectives of this work:
\vspace{-.5em}
\begin{enumerate}[leftmargin=2em]
    \item Structure identification: given observed variables $\mX$, we would like to recover the edge set $\mE$.

    \item Latent-variable identification: given observed variables $\mX$, we would like to obtain a set of variables $ \hat{\mZ} := \{ \hat{\zz}_{1}, \dots, \hat{\zz}_{m} \} $ where for $i \in [m]$, $ \zz_{i} $ and $\hat{\zz}_{i}$ are identical up to an invertible mapping, i.e., $ \hat{\zz}_{i} = h_{i} (\zz_{i}) $, where $h_{i}$ is an invertible function. \footnote{
        We are concerned about representation vectors corresponding to each individual variable. Thus, identifiability up to invertible transforms suffices.
        Generally, stronger identifiability (e.g., component-wise) necessitates additional assumptions, e.g., multiple distributions~\citep{khemakhem2020variational}, sparsity assumptions~\citep{zheng2022identifiability}.
    }
\end{enumerate}

Definition \ref{condition:general_models} gives a general class of latent hierarchical causal models. 
On the functional constraint, the general nonlinear function form (Equation~\ref{eq:nonlinear_generation}) renders it highly challenging to identify either the graph structure or the latent variables.
Prior work~\cite{Pearl88,choi2011learning,xie2022identification,huang2022latent} relies on either linear model conditions or discrete variable assumptions.
In this work, we remove these constraints to address the general nonlinear case with continuous variables.
On the structure constraint, identifying arbitrary causal structures is generally impossible with only observational data.
For instance, tree-like structures are assumed in prior work~\citep{Pearl88,choi2011learning,huang2022latent} for structural identifiability.
In the following, we present relaxed structural conditions under which we show structural identifiability.

\begin{definition}[Pure Children]
    $\vv_{i}$ is a pure child of another variable $\vv_{j}$, if $\vv_{j}$ is the only parent of $\vv_{i}$ in the graph, i.e., $\text{Pa} (\vv_{i}) = \{\vv_{j}\}$.
\end{definition}

\begin{definition}[Siblings]
    Variables $\vv_{i}$ and $\vv_{j}$ are siblings if $ \text{Pa}(\vv_{i}) \cap \text{Pa}(\vv_{j}) \neq \emptyset $. 
\end{definition}

\vspace{-.5em}

\begin{condition}[Structural Conditions] \label{condition:structural_conditions} {\ }
    \vspace{-.7em}
    \begin{enumerate}[label=\roman*,leftmargin=2em,itemsep=-0.5em]
        \item Each latent variable has at least 2 pure children. \label{asmp:pure_children}
        \item There are no directed paths between any two siblings. \label{asmp:no_triangles}
    \end{enumerate}
\end{condition}
\vspace{-.5em}
Intuitively, Condition~\ref{condition:structural_conditions}-\ref{asmp:pure_children} allows each latent variable to leave a footprint sufficient for identification. 
This excludes some fundamentally unidentifiable structures. 
For instance, if latent variables $\zz_{1}$ and $\zz_{2}$ share the same children $\mX$, i.e., $ \zz_{1} \to \mX$ and $\zz_{2} \to \mX$, the two latent variables cannot be identified without further assumptions, while pure children would help in this case.
The pure child assumption naturally holds in many applications with many directly measured variables, including psychometrics, image analysis, and natural languages.
We require fewer pure children than prior work~\citep{Silva-linearlvModel, Kummerfeld2016} and place no constraints on the number of neighboring variables as in prior work~\citep{huang2022latent,xie2022identification}.

Condition~\ref{condition:structural_conditions}-\ref{asmp:no_triangles} avoids potential triangle structures in the latent hierarchical model. 
Triangles present significant obstacles for identification -- in a triangle structure formed by $ \zz_{1} \to \zz_{2} \to \zz_{3} $ and $\zz_{1} \to \zz_{3}$, it is difficult to discern how $\zz_{1}$ influences $\zz_{3}$ without functional constraints, as there are two directed paths between them.
We defer the discussion of how to use our techniques to handle more general graphs that include triangles to Appendix~\ref{subsec:relax_triangles}.
Condition~\ref{condition:structural_conditions}-\ref{asmp:no_triangles} is widely adopted in prior work, especially those on tree structures~\citep{Pearl88,choi2011learning,drton2017marginal} (which cannot handle multiple undirected paths between variables as we do) and more recent work~\citep{huang2022latent}.
To the best of our knowledge, only \citet{xie2022identification} manage to handle triangles in the latent hierarchical structure, which, however, heavily relies on strong assumptions on both the function class (linear functions), distribution (non-Gaussian), and structures (the existence of neighboring variables).

\vspace{-1mm}
\section{Identifiability of Nonlinear Latent Hierarchical Causal Models} \label{sec:theory}
\vspace{-1.5mm}

This section presents the identifiability and identification procedure of causal structures and latent variables in nonlinear latent hierarchical causal models, from only observed variables.
First, in Section~\ref{subsec:basis_model_theory}, we introduce a latent variable model (i.e., the \textit{basis model} in Figure~\ref{fig:basis_model}), whose identifiability underpins the identifiability of the hierarchical model.
In Section~\ref{subsec:local_identifiability}, we establish the connection between the basis model and the hierarchical model. Leveraging this, in Section~\ref{subsec:global_identifiability}, we show by construction that both causal structures and latent variables are identifiable in the hierarchical model.

\subsection{Basis Model Identifiability} \label{subsec:basis_model_theory}

\begin{figure}[t!]
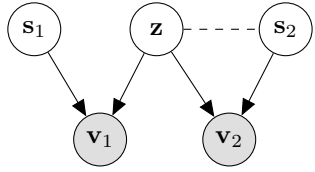

\vspace{-5mm}
    \centering
    \begin{minipage}[c]{0.4\textwidth}
    \centering
    \tikz{
        \node[obs] (x1) {$\vv_{1}$};%
        \node[obs,right=1cm of x1] (x2) {$\vv_{2}$};%
        \node[latent,above=0.75cm of x1,xshift=-.875cm] (s1) {$\ss_{1}$}; %
        \node[latent,above=.75cm of x2,xshift=.75cm] (s2) {$\ss_{2}$}; 
        \node[latent,above=.75cm of x1,xshift=.75cm] (c) {$\zz$}; %
        \edge[-,dashed] {c} {s2}
        \edge {s1} {x1}
        \edge {s2} {x2}
        \edge {c} {x1,x2} 
}
    \end{minipage}
    \hspace{-.5em}
    \begin{minipage}[c]{0.5\textwidth}
    \centering
    \caption{\footnotesize
    \textbf{The basis model}. $\vv_{1}$ and $\vv_{2}$ are generated from nonlinear functions $g_{1}$ and $g_{2}$ which can be distinct and non-invertible individually. We also admit dependence between $\zz$ and $\ss_{2}$, as indicated by the dashed edge. }
    \label{fig:basis_model}
    \end{minipage}
    \vspace{-1em}
\end{figure}

We present the data-generating process of the basis model in Equation~\ref{eq:basis_model} and illustrate it in Figure~\ref{fig:basis_model}:
\begin{align} \label{eq:basis_model}
    \vv_{1} = g_{1} (\zz, \ss_{1}) \qquad \vv_{2} = g_{2} (\zz, \ss_{2}),
\end{align}
where $\vv_{1} \in \cV_{1} \subset \R^{d_{v_{1}}}$, $\vv_{2} \in \cV_{2} \subset \R^{d_{v_{2}}}$, $\zz \in \cZ \subset \R^{d_{z}} $, $\ss_{1} \in \cS_{1} \subset \R^{d_{s_{1}}} $, and $\ss_{2} \in \cS_{2} \subset \R^{d_{s_{2}}} $ are variables consisting of potentially multiple components.
The data-generating process admits potential dependence between $ \zz $ and $ \ss_{2} $ and dependence across dimensions in each variable.
We denote the entire latent variable $\cc := [\zz, \ss_{1}, \ss_{2} ] \in \cC$ and the mapping from $\cc$ to $(\vv_{1}, \vv_{2})$ as $g: \cC \to \cV_{1} \times \cV_{2}$.

Below, we introduce notations and basic definitions that we use throughout this work.

\vspace{-.5em}
\paragraph{Indexing:}
For a matrix $\mM$, we denote its $i$-th row as $\mM_{i,:}$, its $j$-th column as $\mM_{:,j}$, and its $(i, j)$ entry as $\mM_{i, j}$.
Similarly, for an index set $\cI \subseteq \{1, \dots, m \} \times \{1, \dots, n \} $, we denote $\cI_{i,:} := \{ j: (i,j) \in \cI  \}$ and $\cI_{:,j} := \{ i: (i,j) \in \cI \}$.

\vspace{-.5em}
\paragraph{Subspaces:}
We denote a subspace of $\R^{n}$ defined by an index set $\cS$ as $\R_{\cS}^{n}$, where $ \R_{\cS}^{n}:= \{ \zz \in \R^{n}: \forall i \notin \cS, z_{i} = 0 \} $.

\vspace{-.5em}
\paragraph{Support of matrix-valued functions:}
We define the support of a matrix-valued function $ \mM(\xx): \cX \to \R^{m\times n} $ as $\text{Supp}(\mM) := \{ (i, j): \exists \xx \in \cX, \text{s.t.}, \, \mM(\xx)_{i, j} \neq 0 \}$, i.e., the set of indices whose corresponding entries are nonzero for some $\xx \in \cX$.

In Theorem~\ref{thm:sparsity} below, we show that the latent variable $\zz$ is identifiable up to an invertible transformation by estimating a generative model $( \hat{p}_{\zz, \ss_{2}}, \hat{p}_{\ss_{1}}, \hat{g} )$ according to Equation~\ref{eq:basis_model}. 
We denote Jacobian matrices of $g$ and $\hat{g}$ as $\mJ_{g}$ and $\mJ_{\hat{g}}$, and their supports as $\cG$ and $\hat{\cG}$, respectively.
Further, we denote as $\cT$ a set of matrices with the same support as that of the matrix-valued function $\mJ_{g}^{-1} (\cc) \mJ_{\hat{g}} (\hat{\cc})$. 

\begin{restatable}[Basis model conditions]{assumption}{basisassumption} \label{asmp:basis_iden} {\ } 
\vspace{-.5em}
    \begin{enumerate}[label=\roman*,leftmargin=2em,itemsep=-0.5em]
        \item \label{asmp:invertibility} [Differentiability \& Invertibility]:
        The mapping $g(\cc) = (\vv_{1}, \vv_{2})$ is a differentiable invertible function with a differentiable inverse.
        \item \label{asmp:span} [Subspace span]:
        For all $ i \in \{ 1, \dots, d_{v_{1}} + d_{v_{2}}\} $, there exists $ \{ \cc^{(\ell)}\}_{\ell=1}^{ \abs{ \cG_{i,:} } } $ and $\mT \in \cT$, such that $ \text{span}( \{ \mJ_{g}(\cc^{(\ell)})_{i,:}\}_{\ell=1}^{ \abs{ \cG_{i,:} } } ) = \R_{\cG_{i,:}}^{ d_{c} } $ and $ [\mJ_{g} (\cc^{(\ell)}) \mT]_{i,:} \in \R^{^{d_{c}}}_{\hat{\cG}_{i,:}}$.
        \item \label{asmp:connectivity} [Edge Connectivity]:
        For all $j_{z} \in \{ 1, \dots, d_{z} \}$, there exist $ i_{v_{1}}\in \{1, \dots, d_{v_{1}} \} $ and $ i_{v_{2}} \in \{d_{v_{1}}, \dots, d_{v_{1}}+ d_{v_{2}} \}$, such that $ (i_{v_{1}}, j_{z}) \in \cG $ and $ (i_{v_{2}}, j_{z}) \in \cG $.
    \end{enumerate}
\end{restatable}

\begin{restatable}{theorem}{basistheorem} \label{thm:sparsity}
\vspace{0.5em}
    Under Assumption~\ref{asmp:basis_iden}, if a generative model $( \hat{p}_{\zz, \ss_{2}}, \hat{p}_{\ss_{1}}, \hat{g} )$ follows the data-generating process in Equation~\ref{eq:basis_model} and matches the true joint distribution:
    {\small
        \begin{align}
            p_{\vv_{1}, \vv_{2}}( \vv_{1}, \vv_{2} ) = \hat{p}_{\vv_{1}, \vv_{2}} (\vv_{1}, \vv_{2}), \,\forall (\vv_{1}, \vv_{2}) \in \cV \times \cV,
        \end{align}
    }%
    then the estimated variable $ \hat{\zz} $ and the true variable $\zz$ are equivalent up to an invertible transformation.
\end{restatable}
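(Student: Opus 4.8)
The plan is to pass to the \emph{indeterminacy map} between the two latent parametrizations and show that it keeps the $\zz$-coordinates decoupled from the $\ss$-coordinates. By Assumption~\ref{asmp:basis_iden}-\ref{asmp:invertibility} the true mixing $g$ is a diffeomorphism; since the estimated model follows Equation~\ref{eq:basis_model} and induces the same law of $(\vv_{1},\vv_{2})$, the image of $\hat{g}$ coincides with that of $g$, so $h:=\hat{g}^{-1}\circ g: \cC\to\hat{\cC}$ is a well-defined diffeomorphism with differentiable inverse, and it transports $p_{\cc}$ to $\hat{p}_{\hat{\cc}}$. Writing $\cc=[\zz,\ss_{1},\ss_{2}]$ and $h(\cc)=[\hat{\zz},\hat{\ss}_{1},\hat{\ss}_{2}]$, the goal becomes: the $\hat{\zz}$-block of $h$ depends on $\zz$ alone and is invertible. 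Since $g=\hat{g}\circ h$, the chain rule gives (at matching base points) $\mJ_{\hat{g}}=\mJ_{g}\,\mJ_{h^{-1}}$, hence $\mJ_{h^{-1}}=\mJ_{g}^{-1}\mJ_{\hat{g}}$; in particular the support of $\mJ_{h^{-1}}$ is exactly the pattern defining $\cT$, so every $\mT\in\cT$ has the sparsity of $\mJ_{h^{-1}}$.

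Next I would record the sparsity forced by the generative template. Because $\vv_{1}=g_{1}(\zz,\ss_{1})$ ignores $\ss_{2}$ and $\vv_{2}=g_{2}(\zz,\ss_{2})$ ignores $\ss_{1}$, the support $\cG$ of $\mJ_{g}$ is block structured: the rows of $\mJ_{g}$ attached to the coordinates of $\vv_{1}$ (the ``$\vv_{1}$-rows'') are supported only in the $\zz$- and $\ss_{1}$-columns, and the $\vv_{2}$-rows only in the $\zz$- and $\ss_{2}$-columns. Since $\hat{g}$ obeys the same template, $\hat{\cG}$ has the analogous block structure in the hatted coordinates.

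The crux uses Assumption~\ref{asmp:basis_iden}-\ref{asmp:span}. Fix a row $i$ and let $\{\cc^{(\ell)}\}_{\ell}$ and $\mT$ be the witnesses it supplies. Each relation $[\mJ_{g}(\cc^{(\ell)})\mT]_{i,:}\in\R_{\hat{\cG}_{i,:}}^{d_{c}}$ is linear in the row $\mJ_{g}(\cc^{(\ell)})_{i,:}$, and these rows span $\R_{\cG_{i,:}}^{d_{c}}$; hence every coordinate vector $\ee_{j}$ with $j\in\cG_{i,:}$ is a linear combination of them, so $\mT_{j,:}=\ee_{j}^{\top}\mT\in\R_{\hat{\cG}_{i,:}}^{d_{c}}$. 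As $\mT$ has the support of $\mJ_{h^{-1}}$, this says $\text{Supp}(\mJ_{h^{-1}})_{j,:}\subseteq\hat{\cG}_{i,:}$ whenever $(i,j)\in\cG$; intersecting over $i$ gives $\text{Supp}(\mJ_{h^{-1}})_{j,:}\subseteq\bigcap_{i\in\cG_{:,j}}\hat{\cG}_{i,:}$ for every column $j$. Now apply Assumption~\ref{asmp:basis_iden}-\ref{asmp:connectivity}: for a column $j$ in the $\zz$-block there are a $\vv_{1}$-row $i_{v_{1}}$ and a $\vv_{2}$-row $i_{v_{2}}$ in $\cG_{:,j}$, so the intersection lies in (the $\hat{\zz}$- or $\hat{\ss}_{1}$-columns) $\cap$ (the $\hat{\zz}$- or $\hat{\ss}_{2}$-columns), i.e.\ in the $\hat{\zz}$-columns. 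Hence the $\zz$-rows of $\mJ_{h^{-1}}$ vanish outside the $\hat{\zz}$-columns everywhere, which means $\zz$ is a function of $\hat{\zz}$ alone: $\zz=\phi(\hat{\zz})$.

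To finish I would invert this dependence. Group the coordinates of $\cc$ into the $\zz$-block and the $(\ss_{1},\ss_{2})$-block; the previous step shows $\mJ_{h^{-1}}$ is block triangular for this grouping. Since $h^{-1}$ is a diffeomorphism, $\mJ_{h^{-1}}$ is everywhere nonsingular, and---using that the estimated model carries the same latent dimensionalities, so the diagonal blocks are square---nonsingularity forces both diagonal blocks to be invertible. Then $\mJ_{h}=\mJ_{h^{-1}}^{-1}$ has the same block-triangular form (with rows and columns regrouped accordingly), so the $\hat{\zz}$-rows of $\mJ_{h}$ vanish outside the $\zz$-columns; thus the $\hat{\zz}$-block of $h$ is a function $h'$ of $\zz$ alone. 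Because $h$ and $h^{-1}$ are mutual inverses, $\phi$ and $h'$ compose to the identity in both orders, so $h'$ is invertible and $\hat{\zz}=h'(\zz)$, which is the claim. The step I expect to be the main obstacle is exactly this endgame together with the set-up of $h$: one must use the distribution match and continuity (Assumption~\ref{asmp:basis_iden}-\ref{asmp:invertibility}) to certify that $h$ is a \emph{globally} defined diffeomorphism, and one must deal carefully with the latent dimensions, because the span/connectivity argument by itself yields only the one-sided relation $\zz=\phi(\hat{\zz})$ and the inequality $d_{\hat{z}}\ge d_{z}$---upgrading this to a genuine two-sided equivalence requires either the matched-dimension convention used above or a symmetric reapplication of Assumption~\ref{asmp:basis_iden} with $g$ and $\hat{g}$ swapped.
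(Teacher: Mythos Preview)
Your proposal is correct and follows essentially the same route as the paper: define $h=\hat{g}^{-1}\circ g$, use the span condition to propagate the row supports of $\mJ_{g}$ through $\mT$ into $\hat{\cG}$ (the paper's Equation~\eqref{eq:connection}), then use edge connectivity to force the $\zz$-rows of $\mJ_{h^{-1}}$ into the $\hat{\zz}$-columns, and finish with a block-triangular inverse argument. Your explicit intersection $\text{Supp}(\mJ_{h^{-1}})_{j,:}\subseteq\bigcap_{i\in\cG_{:,j}}\hat{\cG}_{i,:}$ and your honest flagging of the dimension-matching/global-diffeomorphism issue are slightly more careful than the paper's writeup, but the argument is the same.
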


The proof can be found in Appendix~\ref{subsec:proof_basis}.

\vspace{-.6mm}
\paragraph{Interpretation.}
Intuitively, we can infer the latent variable $\zz$ shared by the two observed variables $\vv_{1}$ and $\vv_{2})$, in a sense that the estimated variable $\hat{\zz}$ contains all the information of $\zz$ without mixing any information from $\ss_{1}$ and $\ss_{2}$.
The identifiability up to an invertible mapping is extensively employed in identifiability theory~\citep{casey2000separation,hyvarinen2000emergence,le2011learning,theis2006towards,von2021self,kong2022partial}.

\vspace{-.5mm}
\paragraph{Discussion on assumptions.}
Assumption~\ref{asmp:basis_iden}-\ref{asmp:invertibility} assumes the mapping $g$ preserves latent variables' information, guaranteeing the possibility of identification. 
Such assumptions are universally adopted in the existing literature on nonlinear 
causal model identification~\citep{hyvarinen2019nonlinear,khemakhem2020variational,kong2022partial,lyu2022understanding,von2021self}. 

Assumption~\ref{asmp:basis_iden}-\ref{asmp:span} guarantees that the influence of $\zz$ changes adequately across its domain, as discussed in prior work~\citep{lachapelle2022disentanglement,zheng2022identifiability}.
This eliminates situations where the Jacobian matrix is partially constant, causing it to insufficiently capture the connection between the observed and latent variables. 
This condition is specific for nonlinear functions, and a counterpart can be derived for linear cases~\citep{zheng2022identifiability}.

Assumption~\ref{asmp:basis_iden}-\ref{asmp:connectivity} is a formal characterization of the common cause variable z in the basis model. 
This assumption excludes the scenario where some dimensions of $\zz$ only influence one of  $\vv_{1}$ and $\vv_{2}$, in which case these dimensions of $\zz$ are not truly the common cause of $\vv_{1}$ and $\vv_{2}$ and should be modeled as a separate variable rather than part of $\zz$.
This is equivalent to causal minimality, which is necessary for casual structure identification for general function classes with observational data~\citep{peters2017elements}.

\paragraph{Comparison with prior work.}
In contrast to similar data-generating processes in prior work~\citep{lyu2022understanding,von2021self}, Theorem~\ref{thm:sparsity} allows the most general conditions.
First, Theorem~\ref{thm:sparsity} does not necessitate the invertibility for $g_{1}(\cdot)$ and $g_{2}(\cdot)$ individually as in \citep{lyu2022understanding,von2021self}.
The invertibility of $g_{1}$ and $g_{2}$ amounts to asking that $\zz$'s information is duplicate over $\vv_{1}$ and $\vv_{2}$.
In contrast, Assumption~\ref{asmp:basis_iden}-\ref{asmp:invertibility} only requires that $\zz$'s information is stored in $(\vv_{1}, \vv_{2})$ jointly.
Moreover, generating functions $g_{1}(\cdot)$ and $g_{2}(\cdot)$ are assumed identical in \citep{von2021self}, as opposed to being potentially distinct in Theorem~\ref{thm:sparsity}.
Additionally, Theorem~\ref{thm:sparsity} allows for dependence between $\zz$ and $\ss_{2}$ which is absent in \citep{lyu2022understanding}.

These relaxations expand the applicability of the basis model. For example, the distinction between $g_{1}$ and $g_{2}$ is indispensable in our application to the hierarchical model, as we cannot assume each latent variable generates its many children and descendants through the same function.
Our proof technique is distinct from prior related work, which can be of independent interest to the community.

\vspace{-1mm}
\subsection{Local Identifiability of Latent Hierarchical Models} \label{subsec:local_identifiability}
\vspace{-2mm}

In this section, we build the connection between the basis and hierarchical models.
Specifically, we show in Theorem~\ref{thm:element_search_theory} that a careful grouping of variables in the hierarchical model enables us to apply Theorem~\ref{thm:sparsity} to identify latent variables and their causal relations in a local scope. To this end, we modify Assumption~\ref{asmp:basis_iden} in basis models and obtain Assumption~\ref{asmp:global_iden} for hierarchical models.

\begin{restatable}[Hierarchical model conditions]{assumption}{globalassumption} {\ } \label{asmp:global_iden}
\vspace{-.5em}
    \begin{enumerate}[label=\roman*,leftmargin=2em,itemsep=-2pt,topsep=0pt]
        \item \label{asmp:differentiable} [Differentiability]: Structure equations in Equation~\ref{eq:nonlinear_generation} are differentiable.
        \item \label{asmp:global_invertiblity} [Information-conservation]:
        Any $\zz \in \mZ$ and exogenous variable $ \bm\varepsilon $ can be expressed as differentiable functions of \textit{all} observed variables, i.e., $ \zz = f_{z} ( \mX ) $ and $ \bm\varepsilon = f_{ \varepsilon } (\mX) $.
        \item \label{asmp:global_span} [Subspace span]:
        For each set $\mA$ that d-separates all its ancestors $\text{Anc}(\mA)$ and observed variables $\mX$, i.e., $ \mX \ind \text{Anc}(\mA) | \mA $, for any $\zz_{\mA} \in \text{Pa}(\mA)$, there exists an invertible mapping from a set of ancestors of $\mA$ containing $\zz_{\mA}$ to the separation set $\mA$, such that this mapping satisfies the subspace span condition (i.e., Assumption~\ref{asmp:basis_iden}-\ref{asmp:span}).
        \item \label{asmp:global_connectivity} [Edge connectivity]:
        The function between each latent variable $\zz$ and each of its children $\zz'$ has a Jacobian $\mJ$, such that for all $ j \in \{ 1, \dots, d_{z} \} $, there exists $ i \in \{1, \dots, d_{z'} \} $, such that $ (i ,j) $ is in the support of $\mJ$.
    \end{enumerate}
\vspace{-.4em}
    
\end{restatable}

\begin{restatable}{theorem}{localtheorem} \label{thm:element_search_theory}
    In a latent hierarchical causal model that satisfies Condition~\ref{condition:structural_conditions} and Assumption~\ref{asmp:global_iden}, we consider $\xx_{i} \in \mX$ as $\vv_{1}$ and $\mX \backslash \vv_{1}$ as $ \vv_{2}$ in the basis model (Figure~\ref{fig:basis_model}). \footnote{
        To avoid cluttering, we slightly abuse the bold lowercase font to represent either an individual vector or a vector set (which can be viewed as a concatenation).
    }
    With an estimation model $( \hat{p}_{\zz, \ss_{2}}, \hat{p}_{\ss_{1}}, \hat{g} )$ that follows the data-generating process in Equation~\ref{eq:basis_model}, 
    the estimated $ \hat{\zz} $ is a one-to-one mapping of the parent(s) of $ \vv_{1} $, i.e., $ \hat{\zz} = h( \text{Pa} (\vv_{1}) ) $ where $ h(\cdot) $ is an invertible function.
\end{restatable}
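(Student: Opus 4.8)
The plan is to verify that the construction $\vv_1 = \xx_i$, $\vv_2 = \mX \backslash \xx_i$ instantiates a valid basis model satisfying Assumption~\ref{asmp:basis_iden}, so that Theorem~\ref{thm:sparsity} applies and yields $\hat\zz = h(\zz)$ for an invertible $h$, where $\zz$ is the shared latent of this basis model; then identify that shared latent $\zz$ with $\text{Pa}(\xx_i)$. The first step is to pin down what $\zz$, $\ss_1$, $\ss_2$ are in this grouping. Let $\zz := \text{Pa}(\xx_i)$. By Equation~\ref{eq:nonlinear_generation}, $\xx_i = g_{x_i}(\text{Pa}(\xx_i), \bm\varepsilon_{x_i})$, so setting $\ss_1 := \bm\varepsilon_{x_i}$ and $g_1 := g_{x_i}$ gives $\vv_1 = g_1(\zz, \ss_1)$. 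For $\vv_2 = \mX \backslash \xx_i$, I collect into $\ss_2$ all exogenous variables feeding any variable in $\mX\backslash\xx_i$ or its ancestors, together with whatever ancestors/latents are needed to recursively express $\mX\backslash\xx_i$; then unrolling the structural equations along the DAG expresses $\mX\backslash\xx_i$ as a single differentiable function $g_2(\zz, \ss_2)$. The subtlety here — and the reason Condition~\ref{condition:structural_conditions}-\ref{asmp:pure_children} matters — is that $\text{Pa}(\xx_i)$ genuinely influences $\mX\backslash\xx_i$ and is genuinely shared, so that $\zz$ is the common cause and not an artifact; I would argue this using the two pure children guaranteed for each latent, one of which is $\xx_i$ (or more carefully, using that $\xx_i$ has at least one sibling generated by the same parent set, which lives in $\mX\backslash\xx_i$). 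Dependence between $\zz$ and $\ss_2$ is exactly what the basis model permits (the dashed edge), which is essential since $\text{Pa}(\xx_i)$'s own ancestors and their noises appear in the generation of $\mX\backslash\xx_i$.

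Next I would check the three conditions of Assumption~\ref{asmp:basis_iden} in turn. For \ref{asmp:invertibility}, differentiability is immediate from Assumption~\ref{asmp:global_iden}-\ref{asmp:differentiable}; invertibility of the joint map $g:(\zz,\ss_1,\ss_2)\mapsto(\vv_1,\vv_2)$ follows from the information-conservation Assumption~\ref{asmp:global_iden}-\ref{asmp:global_invertiblity}, which says every latent and every exogenous variable is a differentiable function of all of $\mX=(\vv_1,\vv_2)$ — this supplies the differentiable inverse, once one checks the chosen $\ss_1,\ss_2$ together with $\zz$ form a minimal generating set with matching total dimension (here one uses causal minimality / Assumption~\ref{asmp:global_iden}-\ref{asmp:global_connectivity} to discard redundant exogenous coordinates). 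For \ref{asmp:span}, I would invoke Assumption~\ref{asmp:global_iden}-\ref{asmp:global_span}: taking $\mA = \mX$, which trivially d-separates its ancestors from $\mX$ (since $\mX\subseteq\mX$ and conditioning on $\mX$ blocks everything), the span condition for the mapping from an ancestor set containing $\text{Pa}(\xx_i)=\zz$ into $\mX$ is precisely what \ref{asmp:global_span} grants, and this is Assumption~\ref{asmp:basis_iden}-\ref{asmp:span}. For \ref{asmp:connectivity}, I need every coordinate $j$ of $\zz=\text{Pa}(\xx_i)$ to touch both $\vv_1$ and $\vv_2$: the $\vv_1$ side is Assumption~\ref{asmp:global_iden}-\ref{asmp:global_connectivity} applied to the edge $\text{Pa}(\xx_i)\to\xx_i$; the $\vv_2$ side uses the same edge-connectivity assumption applied to the edge from $\text{Pa}(\xx_i)$ to another of its children (which exists and lies in $\mX\backslash\xx_i$ by the pure-children condition, possibly after propagating the influence down a chain, where Condition~\ref{condition:structural_conditions}-\ref{asmp:no_triangles} ensures the propagation does not get cancelled along a competing path).

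With Assumption~\ref{asmp:basis_iden} verified and the estimation model $(\hat p_{\zz,\ss_2},\hat p_{\ss_1},\hat g)$ matching the joint law of $(\vv_1,\vv_2)=(\xx_i,\mX\backslash\xx_i)$, Theorem~\ref{thm:sparsity} gives directly that $\hat\zz$ equals the basis-model shared latent up to an invertible transformation, i.e.\ $\hat\zz = h(\text{Pa}(\xx_i))$ with $h$ invertible, which is the claim. I expect the main obstacle to be the \emph{construction of $g_2$ and the verification of global invertibility of $g$}: one must carefully choose the latent block $(\ss_1,\ss_2)$ so that (i) $\mX\backslash\xx_i$ is a bona fide function of $(\zz,\ss_2)$ alone, (ii) $\xx_i$ does not secretly leak information about $\ss_2$ beyond what is forced, and (iii) the total latent dimension matches the observed dimension so that Assumption~\ref{asmp:global_iden}-\ref{asmp:global_invertiblity} can be upgraded from ``functions of $\mX$ exist'' to ``$g$ is a bijection with differentiable inverse.'' Handling this cleanly requires a careful topological-order unrolling of the DAG and an appeal to causal minimality to prune non-influential exogenous coordinates; the rest is bookkeeping that routes each sub-condition of Assumption~\ref{asmp:global_iden} to its basis-model counterpart in Assumption~\ref{asmp:basis_iden}.
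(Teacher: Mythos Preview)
Your proposal is correct and follows essentially the same route as the paper's own proof: set $\zz=\text{Pa}(\xx_i)$, $\ss_1=\bm\varepsilon_{x_i}$, build $\ss_2$ from the remaining causes of $\mX\setminus\xx_i$, then verify Assumption~\ref{asmp:basis_iden}-\ref{asmp:invertibility},\ref{asmp:span},\ref{asmp:connectivity} via Assumption~\ref{asmp:global_iden}-\ref{asmp:global_invertiblity},\ref{asmp:global_span},\ref{asmp:global_connectivity} (plus the pure-children Condition~\ref{condition:structural_conditions}-\ref{asmp:pure_children}) and invoke Theorem~\ref{thm:sparsity}. The paper is slightly more concrete about $\ss_2$---it specifies a backtracking procedure that walks up from $\vv_2$ and collects causes \emph{not on any directed path from $\zz$ to $\vv_2$}, halting at $\zz$ or once off those paths---which directly addresses your stated ``main obstacle'' and makes the d-separation/invertibility check cleaner; but the overall architecture and the routing of sub-conditions is the same.
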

\vspace{-.8em}    

The proof can be found in Appendix~\ref{subsec:proof_basis_search}.

\paragraph{Interpretation.}
Theorem~\ref{thm:element_search_theory} shows that invoking Theorem~\ref{thm:sparsity} with the assignment $\vv_{1} = \xx$ and $ \vv_{2} = \mX \backslash \{ \xx \} $ can identify the parents of $\xx$ in the hierarchical model.
Intuitively, we can identify latent variables one ``level'' above the current level $\mX$.
Moreover, as the estimated variables are equivalent to true variables up to one-to-one mappings, we take a step further in Section~\ref{subsec:global_identifiability} to show that this procedure can be applied to the newly estimated variables iteratively to traverse the hierarchical model ``level'' by ``level''.
In the degenerate case where $\vv_{1}$ does not have any parents, i.e., the root of the hierarchical model, the identified variable $\zz$ in the basis model corresponds to the $\vv_{1}$ itself -- we can regard $\ss_{1}$ as a constant and $g_{1}(\cdot)$ as an identity function in Theorem~\ref{thm:sparsity}.

\paragraph{Discussion on assumptions.}
Assumption~\ref{asmp:global_iden}-\ref{asmp:global_invertiblity} corresponds to Assumption~\ref{asmp:basis_iden}-\ref{asmp:invertibility} for the basis model.
Intuitively, it states that the information in the system is preserved without unnecessary duplication.
As far as we know, the existing literature on causal variable identification for nonlinear models~\citep{hyvarinen2019nonlinear,khemakhem2020variational,kong2022partial,lyu2022understanding,yao2022learning} universally assumes the invertibility of such a mapping. 
Without this assumption, we cannot guarantee the identification of latent variables from observed variables in the nonlinear case.

Assumption~\ref{asmp:global_iden}-\ref{asmp:global_span} is an extension of Assumption~\ref{asmp:basis_iden}-\ref{asmp:span} to the hierarchical case, with $\mA$ and $\zz_{\mA}$ playing the role of $(\vv_{1}, \vv_{2})$ and $\zz$ respectively.

Assumption~\ref{asmp:global_iden}-\ref{asmp:global_connectivity} excludes degenerate cases where an edge connects components that do not influence each other.
This is equivalent to the causal minimality and generally considered necessary for causal identification with observational data~\citep{peters2017elements}.
For instance, we cannot distinguish $ Y:= f(X) + N_{Y} $ and $Y:= c + N_{Y}$ if $f$ is allowed to differ from constant $c$ only outside $X$'s support.
Further, when $\vv_{1} = \xx$ is a pure child of a latent variable $\zz$, Condition~\ref{condition:structural_conditions}-\ref{asmp:pure_children} ensures that $\vv_{2}:= \mA_{\zz} \setminus \{\vv_{1}\}$ contains at least one pure child or descendant of $\zz$ to fulfill Assumption~\ref{asmp:basis_iden}-\ref{asmp:connectivity}.

\paragraph{Implications on measurement causal models.}
Despite being an intermediate step towards the global structure identification, Theorem~\ref{thm:element_search_theory} can be applied to a class of nonlinear measurement models \footnote{We refer to \citet{Silva06} for a general measurement model definition. Here, we consider a popular type of measurement models that has been widely used in the literature (see \cite{xie2020generalized}) in which observed variables do not cause other variables.} with arbitrary latent structures and identify all the latent variables. 
Then, the latent structures in the measurement model can be identified by performing existing causal discovery algorithms, such as the PC algorithm~\citep{Spirtes00}, on the identified latent variables.
We leave the detailed presentation of this application as future work.

\vspace{-1mm}

\subsection{Global Identifiability of Latent Hierarchical Causal Models} \label{subsec:global_identifiability}
\vspace{-2mm}

\begin{figure*}[t]
\vspace{-5mm}
    \centering
         \setlength{\abovecaptionskip}{4.5pt}
     \setlength{\belowcaptionskip}{-3pt}
    \begin{subfigure}{0.3\textwidth}
        \centering
        \begin{tikzpicture}[scale=.56, line width=0.4pt, inner sep=0.2mm, shorten >=.1pt, shorten <=.1pt]
            \draw (0, 2) node(z1)  {{\footnotesize\lacl\,{$\zz_1$}\,}};

            \draw (-2, 1) node(z2) {{\footnotesize\lacl\,{$\zz_2$}\,}};
            \draw (2, 1) node(z3)  {{\footnotesize\lacl\,$\zz_3$\,}};
            
            \draw (-3, -0.5) node[circle,  fill=gray!60, thick, inner sep=0pt, minimum size=12pt](x1)  {{\footnotesize\,$\xx_1$\,}};
            \draw (-2.1, -0.5) node[circle,  fill=gray!60, thick, inner sep=0pt, minimum size=12pt](x2)  {{\footnotesize\,$\xx_2$\,}};
            \draw (-1.2, -0.5) node[circle,  fill=gray!60, thick, inner sep=0pt, minimum size=12pt](x3)  {{\footnotesize\,$\xx_3$\,}};
            \draw (0, -0.5) node(z4)  {{\footnotesize\lacl\,$\zz_4$\,}};

            \draw (1.2, -0.5) node[circle,  fill=gray!60, thick, inner sep=0pt, minimum size=12pt](x7)  {{\footnotesize\,$\xx_7$\,}};
            \draw (2.1, -0.5) node[circle,  fill=gray!60, thick, inner sep=0pt, minimum size=12pt](x8)  {{\footnotesize\,$\xx_8$\,}};
            \draw (3, -0.5) node[circle,  fill=gray!60, thick, inner sep=0pt, minimum size=12pt](x9)  {{\footnotesize\,$\xx_9$\,}};
            
            \draw (-1, -2) node[circle, fill=gray!60, thick, inner sep=0pt, minimum size=12pt](x10)  {{\footnotesize\,$\xx_{4}$\,}};
            \draw (0, -2) node[circle,  fill=gray!60, thick, inner sep=0pt, minimum size=12pt](x11)  {{\footnotesize\,$\xx_{5}$\,}};
            \draw (1, -2) node[circle,  fill=gray!60, thick, inner sep=0pt, minimum size=12pt](x12)  {{\footnotesize\,$\xx_{6}$\,}};

            
            
           \draw[-latex] (z1) -- (z2);
           \draw[-latex] (z1) -- (z3);

           \draw[-latex] (z2) -- (x1);
           \draw[-latex] (z2) -- (x2);
           \draw[-latex] (z2) -- (x3);
           \draw[-latex] (z2) -- (z4);

           \draw[-latex] (z3) -- (x7);
           \draw[-latex] (z3) -- (x8);
           \draw[-latex] (z3) -- (x9);
           \draw[-latex] (z3) -- (z4);

           \draw[-latex] (z4) -- (x10);
           \draw[-latex] (z4) -- (x11);
           \draw[-latex] (z4) -- (x12);
        \end{tikzpicture}
        \caption{}
        \label{subfig:example2_1}
    \end{subfigure}
        \hfill
    ~
    \begin{subfigure}{0.3\textwidth}
        \centering
        \begin{tikzpicture}[scale=.56, line width=0.4pt, inner sep=0.2mm, shorten >=.1pt, shorten <=.1pt]
            \draw (0, 2) node(z1)  {{\footnotesize\lacl\,{$\zz_1$}\,}};

            \draw (-2, 1) node(z2) {{\footnotesize\lacl\,{$\zz_2$}\,}};
            \draw (2, 1) node(z3)  {{\footnotesize\lacl\,$\zz_3$\,}};
            
            \draw (-3, -0.5) node[circle,  fill=gray!60, thick, inner sep=0pt, minimum size=12pt](x1)  {{\footnotesize\,$\xx_1$\,}};
            \draw (-2.1, -0.5) node[circle,  fill=gray!60, thick, inner sep=0pt, minimum size=12pt](x2)  {{\footnotesize\,$\xx_2$\,}};
            \draw (-1.2, -0.5) node[circle, fill=gray!60, thick, inner sep=0pt, minimum size=12pt](x3)  {{\footnotesize\,$\xx_3$\,}};
            \draw (0, -0.5) node[circle, fill=gray!60, thick, inner sep=0pt, minimum size=12pt](z4)  {{\footnotesize\lacl\,$\zz_4$\,}};

            \draw (1.2, -0.5) node[circle, fill=gray!60, thick, inner sep=0pt, minimum size=12pt](x7)  {{\footnotesize\,$\xx_7$\,}};
            \draw (2.1, -0.5) node[circle,  fill=gray!60, thick, inner sep=0pt, minimum size=12pt](x8)  {{\footnotesize\,$\xx_8$\,}};
            \draw (3, -0.5) node[circle,  fill=gray!60, thick, inner sep=0pt, minimum size=12pt](x9)  {{\footnotesize\,$\xx_9$\,}};
            
            \draw (-1, -2) node(x10)  {{\footnotesize\,$\xx_{4}$\,}};
            \draw (0, -2) node(x11)  {{\footnotesize\,$\xx_{5}$\,}};
            \draw (1, -2) node(x12)  {{\footnotesize\,$\xx_{6}$\,}};

            
            
           \draw[-latex] (z1) -- (z2);
           \draw[-latex] (z1) -- (z3);

           \draw[-latex] (z2) -- (x1);
           \draw[-latex] (z2) -- (x2);
           \draw[-latex] (z2) -- (x3);
           \draw[-latex] (z2) -- (z4);

           \draw[-latex] (z3) -- (x7);
           \draw[-latex] (z3) -- (x8);
           \draw[-latex] (z3) -- (x9);
           \draw[-latex] (z3) -- (z4);

           \draw[-latex] (z4) -- (x10);
           \draw[-latex] (z4) -- (x11);
           \draw[-latex] (z4) -- (x12);
        \end{tikzpicture}
        \caption{}
        \label{subfig:example2_2}
    \end{subfigure}
        \hfill
    ~
    \begin{subfigure}{0.3\textwidth}
        \centering
        \begin{tikzpicture}[scale=.56, line width=0.4pt, inner sep=0.2mm, shorten >=.1pt, shorten <=.1pt]
            \draw (0, 2) node(z1)  {{\footnotesize\lacl\,{$\zz_1$}\,}};

            \draw (-2, 1) node[circle,  fill=gray!60, thick, inner sep=0pt, minimum size=12pt](z2) {{\footnotesize\lacl\,{$\zz_2$}\,}};
            \draw (2, 1) node[circle,  fill=gray!60, thick, inner sep=0pt, minimum size=12pt](z3)  {{\footnotesize\lacl\,$\zz_3$\,}};
            
            \draw (-3, -0.5) node(x1)  {{\footnotesize\,$\xx_1$\,}};
            \draw (-2, -0.5) node(x2)  {{\footnotesize\,$\xx_2$\,}};
            \draw (-1.2, -0.5) node(x3)  {{\footnotesize\,$\xx_3$\,}};
            \draw (0, -0.5) node(z4)  {{\footnotesize\lacl\,$\zz_4$\,}};

            \draw (1.2, -0.5) node(x7)  {{\footnotesize\,$\xx_7$\,}};
            \draw (2, -0.5) node(x8)  {{\footnotesize\,$\xx_8$\,}};
            \draw (3, -0.5) node(x9)  {{\footnotesize\,$\xx_9$\,}};
            
            \draw (-1, -2) node(x10)  {{\footnotesize\,$\xx_{4}$\,}};
            \draw (0, -2) node(x11)  {{\footnotesize\,$\xx_{5}$\,}};
            \draw (1, -2) node(x12)  {{\footnotesize\,$\xx_{6}$\,}};

            
            
           \draw[-latex] (z1) -- (z2);
           \draw[-latex] (z1) -- (z3);

           \draw[-latex] (z2) -- (x1);
           \draw[-latex] (z2) -- (x2);
           \draw[-latex] (z2) -- (x3);
           \draw[-latex] (z2) -- (z4);

           \draw[-latex] (z3) -- (x7);
           \draw[-latex] (z3) -- (x8);
           \draw[-latex] (z3) -- (x9);
           \draw[-latex] (z3) -- (z4);

           \draw[-latex] (z4) -- (x10);
           \draw[-latex] (z4) -- (x11);
           \draw[-latex] (z4) -- (x12);
        \end{tikzpicture}
        \caption{}
        \label{subfig:example2_3}
    \end{subfigure}
    \caption{
        \footnotesize
        \textbf{Evolution of active set $\mA$ in Algorithm~\ref{alg:search_algo}.}
        We mark the active set $\mA$ with shaded gray circles before each iterations of Algorithm~\ref{alg:search_algo}, with Figure~\ref{subfig:example2_1}, Figure~\ref{subfig:example2_2}, and Figure~\ref{subfig:example2_3} corresponding to iteration 1, 2, and 3.
        Before iteration 1, $\mA$ is set to $\mX$ by default.
        At iteration 1, $\zz_{2}$, $\zz_{3}$, and $\zz_{4}$ can be estimated by the basis model; however, only $\zz_{4}$ can be updated into $\mA$.
        Otherwise, directed paths would be introduced by $\zz_{2}$ and $\zz_{3}$.
    }
    \label{fig:procedure_illustration}
    \vspace{-1em}
\end{figure*}

Equipped with the local identifiability (i.e., Theorem~\ref{thm:element_search_theory}), the following theorem shows that \textit{all} causal structures and latent variables are identifiable in the hierarchical model.

\begin{restatable}{theorem}{globaltheorem} \label{thm:overall_identifiability}
    Under assumptions in Theorem~\ref{thm:element_search_theory}, all latent variables $\mZ$ in the hierarchical model can be identified up to one-to-one mappings, and the causal structure $\mG$ can also be identified.
\end{restatable}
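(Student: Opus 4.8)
The plan is to turn the local identifiability result (Theorem~\ref{thm:element_search_theory}) into a global one by an inductive ``peeling'' argument that follows the evolution of the active set $\mA$ in Algorithm~\ref{alg:search_algo}, as illustrated in Figure~\ref{fig:procedure_illustration}. I maintain the invariant that at the start of each iteration the active set $\mA$ is a collection of variables, each identified up to an invertible transformation, such that $\mX \ind \text{Anc}(\mA) \mid \mA$ (i.e., $\mA$ d-separates the observed variables from their latent ancestors), and such that $\mA$ together with its ancestors still forms a latent hierarchical model satisfying Condition~\ref{condition:structural_conditions} and Assumption~\ref{asmp:global_iden} with $\mA$ playing the role of the observed layer. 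The base case is $\mA = \mX$, which trivially satisfies the invariant. The induction terminates when $\mA$ consists of a single root cluster, at which point all latent variables have been recovered.

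For the inductive step, I would argue as follows. Given the current active set $\mA$ (all entries known up to invertible maps), apply Theorem~\ref{thm:element_search_theory} with $\vv_1 = \vv$ ranging over each $\vv \in \mA$ and $\vv_2 = \mA \setminus \{\vv\}$; this produces, for each $\vv$, an estimate $\hat{\zz}$ that is a one-to-one function of $\text{Pa}(\vv)$ within the reduced model. Next I show how to read off the true edge structure among these candidates: two elements $\vv, \vv'$ of $\mA$ are siblings iff their recovered parent estimates are (up to invertible transformation) measurable functions of a common variable, which is detectable because the pure-children condition (Condition~\ref{condition:structural_conditions}-\ref{asmp:pure_children}) guarantees each latent parent has at least two pure children in $\mA$ (or whose information survives into $\mA$), and Condition~\ref{condition:structural_conditions}-\ref{asmp:no_triangles} rules out the confounding directed paths that would otherwise make this test ambiguous. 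Grouping $\mA$ into sibling classes, each class whose members are pure children of a single latent $\zz$ yields, via the consistency of the estimates $\hat{\zz}$ obtained from different pure children in the class, a single well-defined $\hat{\zz} = h(\zz)$ with $h$ invertible. Then I update $\mA$ by replacing each such sibling class by its newly identified parent cluster, but — crucially, and this is where Figure~\ref{subfig:example2_1} is instructive — only for those clusters whose promotion does not create a directed path between two elements of the new $\mA$; the other candidate latents ($\zz_2, \zz_3$ in the figure) are recorded as edges but not yet promoted, to preserve the d-separation invariant. A topological/DAG argument shows such a ``safe'' promotion always exists as long as $|\mA| > 1$, so the process makes progress, and since the graph is finite it terminates.

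Finally, I collect the bookkeeping: the edges recorded across all iterations (sibling-cluster to parent, and parent-to-parent edges discovered when a candidate latent was not immediately promoted) together reconstruct the full edge set $\mE$; the reconstruction is exact because each edge is tested at the level where both endpoints are present in $\mA$ (possibly one as a recorded-but-unpromoted candidate), and at that level Theorem~\ref{thm:element_search_theory} gives the parent set exactly. Combined with the fact that every latent variable gets identified up to an invertible transformation when its sibling class is processed, this yields identifiability of both $\mZ$ (up to one-to-one maps) and $\mG$.

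I expect the main obstacle to be the second paragraph: formalizing the sibling test and, in particular, proving that the ``safe promotion'' step is both well-defined (the consistency of $\hat{\zz}$ recovered from different pure children of the same latent) and never stalls (existence of a promotable cluster) while exactly preserving the d-separation precondition that Theorem~\ref{thm:element_search_theory} needs. Handling the multi-parent, multi-path cases permitted by Condition~\ref{condition:structural_conditions} (e.g., $\zz_1 \rightsquigarrow \zz_4$ via two paths in Figure~\ref{subfig:example2}) without triangles is exactly what makes this delicate, and I would isolate it as a separate lemma about the order in which latent ``levels'' can be peeled.
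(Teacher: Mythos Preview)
Your high-level architecture matches the paper's: an induction on the active set $\mA$, with the invariants that (a) each element of $\mA$ is identified up to an invertible map, (b) $\mA$ contains no internal directed paths, and (c) $\mA$ d-separates the not-yet-identified latents from the already-processed ones. The paper's proof formalizes exactly these three invariants and then performs a case analysis on each candidate parent $\zz \in \text{Pa}(\mA)$ according to whether \emph{all} of its (and its coparents') children are currently in $\mA$. Progress is shown by the same topological observation you gesture at: the active variable farthest from the root has no undiscovered siblings, so its parent is always promotable.

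There is, however, a genuine gap in your sibling-class step. Your grouping rule (``pure children of a single latent $\zz$'') silently assumes each $\vv \in \mA$ has a single parent. When $\vv$ is a shared child in a V-structure (e.g., $\zz_4$ in Figure~\ref{subfig:example2}), Theorem~\ref{thm:element_search_theory} returns a variable equivalent to the \emph{concatenation} $[\zz_2,\zz_3]$, not to either parent alone. Under your measurable-function test this super-variable would be linked to the pure children of $\zz_2$ and of $\zz_3$ simultaneously, collapsing the two latents into one and destroying the V-structure you need to recover. The paper's fix is a dedicated stage: detect a super-variable $\hat{\zz}$ by the \emph{asymmetric} prediction pattern ($\hat{\zz}$ perfectly predicts some other $\hat{\zz}'\in\textbf{P}$ but not vice versa), then search for a subset $\tilde{\mZ}\subset\textbf{P}\setminus\{\hat{\zz}\}$ that jointly predicts $\hat{\zz}$ and replace $\hat{\zz}$ by $\tilde{\mZ}$. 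Without this splitting step your peeling would still produce an invertible image of $\mZ$ in aggregate, but would not separate coparents and hence would not recover $\mE$.

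A second, smaller point: your ``safe promotion'' test needs an operational criterion from data. The paper supplies one via Corollary~\ref{corollary:test_criterion}: re-run the basis model with $\vv_1 = \zz$ and $\vv_2 = \mA \setminus \mH$ (where $\mH$ are the already-identified children); if the output still predicts $\zz$ perfectly, some child of $\zz$ remains in $\mA \setminus \mH$, so promotion is unsafe. You should state this test explicitly rather than leaving it as a graph-side condition, since the graph is what you are trying to learn.
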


\vspace{-1.5mm}
\paragraph{Comparison to prior work.}
Theorem~\ref{thm:overall_identifiability} handles general nonlinear cases, whereas prior work~\citep{Pearl88,choi2011learning,huang2022latent,xie2022identification} is limited to linear function or discrete latent variable conditions.
Structure-wise, our identifiability results apply to latent structures with V-structures and certain triangle structures (see discussion in Appendix~\ref{subsec:relax_triangles}) beyond generalized trees as studied in prior work~\citep{choi2011learning,Pearl88,huang2022latent}, require fewer pure children and no neighboring variables in comparison with \citep{xie2022identification,huang2022latent}, and can determine directions for each edge (c.f., Markov-equivalent classes in \citet{huang2022latent}).

\paragraph{Proof sketch with search procedures.}
The proof can be found in Appendix~\ref{subsec:proof_overall_identifiability}.
In particular, we develop a concrete algorithm (i.e., Algorithm~\ref{alg:search_algo}) and show that it can successfully identify the causal structure and latent variables asymptotically.
We give a proof sketch of Theorem~\ref{thm:element_search_theory} below by navigating through Algorithm~\ref{alg:search_algo} and illustrate it with an example in Figure~\ref{fig:procedure_illustration}.

\textit{Stage 1: identifying parents with the basis model} (line~\ref{step:estimation_loop_start}-line~\ref{step:estimation_loop_end}).
As shown in Theorem~\ref{thm:element_search_theory}, the basis model can identify the latent parents of leaf-level variables in the hierarchy.
In Figure~\ref{subfig:example2_1}, we can identify $\zz_{2}$ when applying the basis model with the assignment $\vv_{1} := \{ \xx_{1} \}$ and $\vv_{2}:= \mX \setminus \vv_{1} $.
Naturally, the basic idea of Algorithm~\ref{alg:search_algo} is to iteratively apply the basis model to the most recently identified latent variables to further identify their parents, which is the purpose of Stage 1 (line~\ref{step:estimation_loop_start}-line~\ref{step:estimation_loop_end}).
In Algorithm~\ref{alg:search_algo}, we define as active set $\mA$ the set of variables to which we apply the basis model.
For example, $\mA$ equals to $\mX$ in the first round (Figure~\ref{subfig:example2_1}) and becomes $ \{ \xx_{1}, \xx_{2}, \xx_{3}, \zz_{4}, \xx_{7}, \xx_{8}, \xx_{9} \} $ in the second round (Figure~\ref{subfig:example2_2}).  

\textit{Stage 2: merging duplicate variables} (line~\ref{step:merge_estimations}).
As multiple variables in $\mA$ can share a parent, dictionary $\textbf{P}(\cdot)$ may contain multiple variables that are one-to-one mappings to each other, which would obscure the true causal structure and increase the algorithm's complexity.
Stage 2 (Line~\ref{step:merge_estimations}) detects such duplicates and represents them with one variable.
In Figure~\ref{subfig:example2_1}, setting $\vv_{1}$ to any of $\xx_{1}$, $\xx_{2}$, and $\xx_{3}$ would yield an equivalent variable of $\zz_{2}$.
We merge the three equivalents by randomly selecting one.

\textit{Stage 3: detecting and merging super-variables} (line~\ref{step:merge_supernodes_start} - line~\ref{step:merge_supernodes_end}).
Due to the potential existence of V-structures, variables in $\mA$ may have multiple parents and produce super-variables in $\textbf{P}$.
For instance, at the second iteration of Algorithm~\ref{alg:search_algo} (i.e., Figure~\ref{subfig:example2_2}), the basis model will be run on $\vv_{1} = \zz_{4}$ and $\vv_{2} = \{\xx_{1}, \xx_{2}, \xx_{3}, \xx_{7}, \xx_{8}, \xx_{9} \} $ and output a variable equivalent to the concatenation $(\zz_{2}, \zz_{3})$.
Leaving this super-variable untouched will be problematic, as we would generate a false causal structure $ \tilde{\zz} \to \zz_{4} $ where $\tilde{\zz}$ is the estimated super-variable $(\zz_{2}, \zz_{3})$, rather than recognizing $\zz_{4}$ is the child of two already identified variables $\zz_{2}$ and $\zz_{3}$.
Line~\ref{step:merge_supernodes_start} to line~\ref{step:merge_supernodes_end} detect such super-variables by testing whether each variable $\hat{\zz}$ in $\textbf{P}$ is equivalent to a union of other variables in $ \textbf{P} $.
If such a union $ \tilde{\mZ}:= \{ \hat{\zz}_{1}, \dots, \hat{\zz}_{m} \} $ exists, we will replace $ \hat{\zz} $ with $ \tilde{\mZ} $ in all its occurrences.
In Figure~\ref{subfig:example2_2}, we would split the variable equivalent to $ [\zz_{2}, \zz_{3}] $ into variables of $\zz_{2}$ and $\zz_{3}$ individually.
If $\hat{\zz}$ is tested to be a super-variable, i.e., it can perfectly predict another variable $\hat{\zz}'$ in $\textbf{P}$ and $ \hat{\zz}' $ cannot predict $ \hat{\zz} $ perfectly, and the equivalent union cannot be found, we will track $\hat{\zz}$ in line~\ref{step:supernode_breakdown} to prevent it from being updated into $\mA$ at line~\ref{step:active_set_updates}.

\textit{Stage 4: detecting and avoiding directed paths in $\mA$} (line~\ref{step:path_detection_start}-line~\ref{step:path_detection_end}).
Ideally, we would like to repeat line~\ref{step:estimation_loop_start} to line~\ref{step:merge_supernodes_end} until reaching the root of the hierarchical model.
Unfortunately, such an approach can be problematic, as this would cause variables in active set $ \mA $ to have directed edges among them, whereas Theorem~\ref{thm:element_search_theory} applies to leaf variable set $\mX$ which contains no directed edges.
In Figure~\ref{subfig:example2_1}, $\textbf{P}$ would contain $\{ \zz_{2}, \zz_{3}, \zz_{4} \}$, as each of these latent variables has pure observed children in $\mX$.
However, due to direct paths within $ \textbf{P} $, i.e., $ \zz_{2} \rightarrow \zz_{4} $ and $ \zz_{3} \rightarrow \zz_{4} $, we cannot directly substitute $ \mX $ with $ \{\zz_{2}, \zz_{3}, \zz_{4}\} $ in $\mA$.
To resolve this dilemma, we proactively detect directed paths emerging with newly estimated variables and defer the local update of such estimated variables to eliminate direct paths.
For directed path detection, we introduce a corollary of Theorem~\ref{thm:element_search_theory} as Corollary~\ref{corollary:test_criterion}.
\begin{corollary} \label{corollary:test_criterion}
    Under assumptions in Theorem~\ref{thm:element_search_theory}, for any $\zz \in \text{Pa} (\mA)$ where $\mA$ is the active set in Algorithm~\ref{alg:search_algo}, we consider $ \vv_{1} := \zz $ and $ \vv_{2}: = \mA \backslash (\tilde{\text{Ch}}(\zz) \cap \mA) $ where $ \tilde{\text{Ch}} (\zz) $ is a strict subset of $ \zz $'s children (i.e., when the active set $\mA$ contains at least one child of $\zz$'s), estimating the basis model yields $ \hat{\zz} $ equivalent to $\zz$ up to a one-to-one mapping.
\end{corollary}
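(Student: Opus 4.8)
The plan is to derive Corollary~\ref{corollary:test_criterion} by re-instantiating the basis-model identifiability theorem, Theorem~\ref{thm:sparsity}, rather than by applying Theorem~\ref{thm:element_search_theory} directly. The conceptual distinction is that the conclusion here differs from that of Theorem~\ref{thm:element_search_theory}: there $\vv_1=\xx_i$ is a leaf whose only overlap with the rest of the system runs through $\text{Pa}(\vv_1)$, so the common cause is identified as $\text{Pa}(\vv_1)$; here $\vv_1=\zz$ is a latent variable whose children inside $\vv_2$ already reflect all of $\zz$, while $\vv_1$ carries nothing beyond $\zz$, so the common cause should be $\zz$ \emph{itself}. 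I would therefore instantiate the basis model for $(\vv_1,\vv_2):=\bigl(\zz,\ \mA\setminus(\tilde{\text{Ch}}(\zz)\cap\mA)\bigr)$ with $\zz$ in the role of the common cause $\zz$ of Equation~\ref{eq:basis_model}.

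Concretely, take $\ss_1$ to be a trivial (constant) variable and $g_1:=\mathrm{id}$, so that $\vv_1=g_1(\zz,\ss_1)=\zz$ holds by construction and $\ss_1$ is vacuously independent of $(\zz,\ss_2)$. For $\vv_2$, compose the structural equations (Equation~\ref{eq:nonlinear_generation}) along the DAG to write each coordinate of $\vv_2$ as a deterministic function of $\zz$ together with a tuple $\ss_2$ of exogenous variables, where $\ss_2$ collects every exogenous noise in the sub-DAG on $\mA\cup\text{Anc}(\mA)$ except $\bm\varepsilon_{\zz}$ (absorbed into $\zz$) and the noise that feeds only into the removed children $\tilde{\text{Ch}}(\zz)\cap\mA$ (which affects neither $\vv_1$ nor $\vv_2$). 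Because $\zz$ may causally precede some entries of $\ss_2$, dependence between $\zz$ and $\ss_2$ is permitted --- precisely the dashed-edge flexibility of the basis model in Figure~\ref{fig:basis_model}.

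With this decomposition fixed I would verify Assumption~\ref{asmp:basis_iden} and then apply Theorem~\ref{thm:sparsity}. Differentiability is inherited from Assumption~\ref{asmp:global_iden}-\ref{asmp:differentiable}. The subspace-span condition (Assumption~\ref{asmp:basis_iden}-\ref{asmp:span}) follows from Assumption~\ref{asmp:global_iden}-\ref{asmp:global_span} with $\mA$ as the separating set and $\zz$ in the role of $\zz_{\mA}$, exactly as in the proof of Theorem~\ref{thm:element_search_theory}. Edge connectivity (Assumption~\ref{asmp:basis_iden}-\ref{asmp:connectivity}) holds because $\tilde{\text{Ch}}(\zz)$ is a proper subset of the children of $\zz$ lying in $\mA$, so $\vv_2$ retains at least one such child $c$; by Assumption~\ref{asmp:global_iden}-\ref{asmp:global_connectivity} every dimension of $\zz$ influences some dimension of $c$, hence of $\vv_2$, and trivially every dimension of $\zz$ influences $\vv_1=\zz$. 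Granting the invertibility of $g:(\zz,\ss_2)\mapsto(\zz,\vv_2)$, Theorem~\ref{thm:sparsity} then yields that the estimated common cause $\hat{\zz}$ equals $\zz$ up to an invertible transformation, which is the claim. (In Algorithm~\ref{alg:search_algo} the slot $\vv_1$ is filled by the previously estimated $\hat{\zz}$, an invertible image of $\zz$; as the statement is only up to invertible transformations, this substitution is immaterial.)

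The main obstacle is the invertibility step: one must show that deleting a proper subset of $\zz$'s children from $\mA$ and adjoining $\zz$ preserves a diffeomorphism between $(\zz,\ss_2)$ and $(\vv_1,\vv_2)$, equivalently that for each fixed value of $\zz$ the map $\ss_2\mapsto\vv_2$ is a diffeomorphism. The delicate point is to define $\ss_2$ so that it excludes exactly the exogenous noise orphaned by the deletion and nothing else. Here one works in the sub-DAG on $\mA\cup\text{Anc}(\mA)$ --- well-defined because $\mA$ d-separates $\text{Anc}(\mA)$ from the observed variables --- in which $\mA$ plays the role of the leaf set; applying the information-conservation property (Assumption~\ref{asmp:global_iden}-\ref{asmp:global_invertiblity}) to this sub-DAG, together with Condition~\ref{condition:structural_conditions}-\ref{asmp:pure_children} (so that at least one child of $\zz$ survives in $\vv_2$ and the other parents of any removed non-pure child remain determined by the rest of $\mA$), is what keeps all the information needed to reconstruct $\ss_2$ from $(\zz,\vv_2)$ intact. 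Once this is in place, the remainder is a routine re-instantiation of Theorem~\ref{thm:sparsity}, mirroring the proof of Theorem~\ref{thm:element_search_theory}.
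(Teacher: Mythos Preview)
Your proposal is correct and follows essentially the same route as the paper: the paper's one-line justification is to set $\vv_{1}$ as an ancestor of $\vv_{2}$ and take $\ss_{1}$ degenerate ``in Theorem~\ref{thm:element_search_theory}'', which in practice means re-running the proof of Theorem~\ref{thm:element_search_theory} (i.e., re-instantiating Theorem~\ref{thm:sparsity}) with exactly the choices you make---$g_{1}=\mathrm{id}$, $\ss_{1}$ constant, and $\ss_{2}$ built from the remaining exogenous variables. Your treatment is in fact more explicit than the paper's, particularly in isolating the invertibility step and noting that a child of $\zz$ must survive in $\vv_{2}$ to secure Assumption~\ref{asmp:basis_iden}-\ref{asmp:connectivity}.
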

\vspace{-2mm}
Corollary~\ref{corollary:test_criterion} can be obtained by setting $\vv_{1}$ as an ancestor of $\vv_{2}$ and $\ss_{1}$ as a degenerate variable (i.e., a deterministic quantity) in Theorem~\ref{thm:element_search_theory}.
Leveraging Corollary~\ref{corollary:test_criterion}, we can detect whether each newly identified latent variable in $ \textbf{P} $ would introduce directed paths into $\mA$ if they were substituted in.
If directed paths exist, the variable $\hat{\zz}$ would contain the same information as $\vv_{1}:=\zz$, which prediction tests can evaluate.
In this event, we will suppress the update of the $\zz$ at this iteration. That is, we still keep its children in $\mA$.
This directed-path detection is conducted in lines~\ref{step:path_detection_start}-~\ref{step:path_detection_end}, after properly grouping variables that share children in lines~\ref{step:clustering_variables_start}-~\ref{step:clustering_variables_end}.
As shown in Figure~\ref{subfig:example2_2}, $\zz_{2}$ and $\zz_{3}$ are not placed in $\mA$, even if they are found in the first iteration.
This update only happens when $\zz_{4}$ has been placed in $\mA$ at the second iteration.

Generally, a latent variable enters $\mA$ only if all its children reside in $\mA$.
We can show that $\mA$ contains all the information of unidentified latent variables -- $\mA$ d-separates the latent variables that have not been placed in $\mA$ and those were in $\mA$ once.
Equipped with such a procedure, we can identify the hierarchical model iteratively until completion.
We discuss Algorithm~\ref{alg:search_algo}'s complexity in Appendix~\ref{app:complexity}.

\begin{algorithm*}[t]
\small
\begin{algorithmic}[1]
    \myState{
        Initialize the active set: $\mA \leftarrow \mX$; 
    }
    \While{ $ \mA \neq \emptyset $}
        \myState{
            initialize an empty set $ \mR $ and empty dictionaries $\textbf{P}$, $ \textbf{JointP} $; 
        }
        \For{each active variable $\aa \in \mA$} \label{step:estimation_loop_start}
            \myState{
                estimate the basis model with $\vv_{1} \!=\! \aa$ and $ \vv_{2} \!=\! \mA \backslash \{ \aa \}$ to obtain $ \hat{\zz} $, and $\textbf{P} \leftarrow \textbf{P} \cup \{(\aa, \hat{\zz})\}$;
            }
        \EndFor \label{step:estimation_loop_end}
        \myState{
            merge equivalent variables in $ \textbf{P} $;
        } \label{step:merge_estimations}
        \For{ each variable $\hat{\zz}$ in $ \textbf{P} $ } \label{step:merge_supernodes_start}
            \If{ $\exists \hat{\zz}' \in \textbf{P}$ that $ \hat{\zz} $ can perfectly predict but $ \hat{\zz}' $ cannot predict $\hat{\zz}$}
                \myState{ \textbf{if} $\exists \tilde{\mZ} \!\subseteq\! \textbf{P} \setminus \{ \hat{\zz} \}$ that perfectly predicts $ \hat{\zz} $, \textbf{then} replace $\hat{\zz}$ with $\tilde{\mZ}$; \textbf{else}: $ \mR \!\leftarrow\! \mR \cup \{ \hat{\zz}\} $; } \label{step:supernode_breakdown}
            \EndIf
        \EndFor \label{step:merge_supernodes_end}
        \myState{
            cluster $\hat{\zz}$ variables in $\textbf{P}$ into $\{ \mZ_{i} \}_{i=1}^{m}$ such that $\mZ_{i}$ is either a singleton or contains spouse variables;
        } \label{step:clustering_variables_start}
        \myState{
            store each cluster $\mZ_{i}$ and its children set $\mH_{i}$ as a pair $ (\mZ_{i}, \mH_{i}) $ into $ \textbf{JointP} $;
        } \label{step:clustering_variables_end}
        \For{ each pair $(\mZ_{i}, \mH_{i}) \in \textbf{JointP}$} \label{step:path_detection_start}
            \myState{
                estimate the basis model with $\vv_{1} \!=\! \mZ_{i}$ and $ \vv_{2} \!=\! \mA \backslash \mH_{i}$ to obtain a variable $ \hat{\zz}_{\text{test}} $;
            }
            \myState{ \textbf{if} $\exists \zz' \in \mZ_{i} $ such that $ \hat{\zz}_{\text{test}} $ can perfectly predict $\zz'$, \textbf{then} $\mR \leftarrow \mR \cup \mZ_{i}$  } \label{step:test_prediction}
        \EndFor \label{step:path_detection_end}
        \For{ each pair $ (\mZ_{i}, \mH_{i}) \in \textbf{JointP} $ }
            \myState{ \textbf{if} no variables in $\mZ_{i}$ has been tracked by $\mR$, i.e., $\mZ_{i} \cap \mR = \emptyset$, \textbf{then} substitute $ \mH_{i} $ with $ \mZ_{i} $ in $\mA$. } \label{step:active_set_updates}
            \EndFor
        \myState{
            remove variable $\aa$ from $ \mA $ if $\aa \ind \mA \setminus \{ \aa \}$.
        }
    \EndWhile
    \myState{
        \textbf{Return}: all the past active sets $\mA$ and parent sets $\textbf{P}$.
    }
\end{algorithmic}
    \caption{\footnotesize\textbf{Identification of Latent Hierarchical Models.} $\mA$: the active set, $\mX$: the observed variable set, $\textbf{P}$/$ \textbf{JointP} $: dictionaries that store the relations between variables in $\mA$ and estimated variables. }
\label{alg:search_algo}
\end{algorithm*}

\vspace{-1.5mm}
\section{Experimental Results} \label{sec:experiments}
\vspace{-1.5mm}

In this section, we present experiments to corroborate our theoretical results in Section~\ref{sec:theory}.
We start with the problem of recovering the basis model in Section~\ref{subsec:basis_expepriments}, which is the foundation of the overall identifiability.
In Section~\ref{subsec:hierarchy_experiments}, we present experiments for hierarchical models on a synthetic dataset and two real-world datasets.

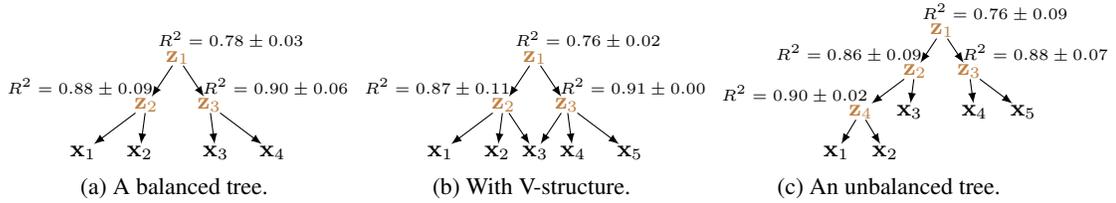
\begin{figure*}[t]
\vspace{0em}
    \centering
    \begin{subfigure}{0.32\textwidth}
        \centering
        \begin{tikzpicture}[scale=.42, line width=0.4pt, inner sep=0.1mm, shorten >=.1pt, shorten <=.1pt]
            \draw (0, 2.5) node[label={[xshift=2em, yshift=0em] \tiny $R^{2}=0.78\pm0.03$}](z1)  {{\footnotesize\lacl\,{$\zz_1$}\,}};
            \draw (-1, 1) node[label={[xshift=-2.5em, yshift=0em]\tiny $R^{2}=0.88\pm0.09$}](z2) {{\footnotesize\lacl\,{$\zz_2$}\,}};
            \draw (1, 1) node[label={[xshift=2.5em, yshift=0em]\tiny $R^{2}=0.90\pm0.06$}](z3) {{\footnotesize\lacl\,$\zz_3$\,}};
            
            \draw (-3, -0.5) node(z4)  {{\footnotesize\,$\xx_{1}$\,}};
            \draw (-1.2, -0.5) node(z5)  {{\footnotesize\,$\xx_{2}$\,}};

            \draw (1.2, -0.5) node(z6)  {{\footnotesize\,$\xx_{3}$\,}};
            \draw (3, -0.5) node(z7)  {{\footnotesize\,$\xx_{4}$\,}};
            
           \draw[-latex] (z1) -- (z2);
           \draw[-latex] (z1) -- (z3);
           \draw[-latex] (z2) -- (z4);
           \draw[-latex] (z2) -- (z5);
           \draw[-latex] (z3) -- (z6);
           \draw[-latex] (z3) -- (z7);
        \end{tikzpicture}
        \caption{A balanced tree.}
        \label{subfig:tree}
    \end{subfigure}
    \hfill
    \begin{subfigure}{0.32\textwidth}
        \centering
        \begin{tikzpicture}[scale=.42, line width=0.4pt, inner sep=0.1mm, shorten >=.1pt, shorten <=.1pt]
            \draw (0, 2.5) node[label={[xshift=2em, yshift=0em] \tiny $R^{2}=0.76\pm 0.02$}](z1)  {{\footnotesize\lacl\,{$\zz_1$}\,}};
            \draw (-1, 1) node[label={[xshift=-2.5em, yshift=0em]\tiny $R^{2}=0.87 \pm 0.11$}](z2) {{\footnotesize\lacl\,{$\zz_2$}\,}};
            \draw (1, 1) node[label={[xshift=2.5em, yshift=0em]\tiny $R^{2}=0.91 \pm 0.00$}](z3) {{\footnotesize\lacl\,$\zz_3$\,}};
            
            \draw (-3, -0.5) node(x1)  {{\footnotesize\,$\xx_{1}$\,}};
            \draw (-1.2, -0.5) node(x2)  {{\footnotesize\,$\xx_{2}$\,}};

            \draw (0, -0.5) node(x3)  {{\footnotesize\,$\xx_{3}$\,}};

            \draw (1.2, -0.5) node(x4)  {{\footnotesize\,$\xx_{4}$\,}};
            \draw (3, -0.5) node(x5)  {{\footnotesize\,$\xx_{5}$\,}};
            
           \draw[-latex] (z1) -- (z2);
           \draw[-latex] (z1) -- (z3);
           \draw[-latex] (z2) -- (x1);
           \draw[-latex] (z2) -- (x2);
           \draw[-latex] (z3) -- (x5);
           \draw[-latex] (z3) -- (x4);
           \draw[-latex] (z2) -- (x3);
           \draw[-latex] (z3) -- (x3);
        \end{tikzpicture}
        \caption{With V-structure.}
        \label{subfig:tree_v}
    \end{subfigure}
    \hfill
    \begin{subfigure}{0.32\textwidth}
        \centering
        \begin{tikzpicture}[scale=.36, line width=0.4pt, inner sep=0.1mm, shorten >=.1pt, shorten <=.1pt]
            \draw (0, 2.5) node[label={[xshift=2em, yshift=0em] \tiny $R^{2}=0.76\pm0.09$}](z1)  {{\footnotesize\lacl\,{$\zz_1$}\,}};
            \draw (-1, 1) node[label={[xshift=-2.5em, yshift=0em]\tiny $R^{2}=0.86\pm0.09$}](z2) {{\footnotesize\lacl\,{$\zz_2$}\,}};
            \draw (1, 1) node[label={[xshift=2.5em, yshift=0em]\tiny $R^{2}=0.88\pm0.07$}](z3) {{\footnotesize\lacl\,$\zz_3$\,}};
            
            \draw (-3, -0.5) node[label={[xshift=-2.5em, yshift=0em]\tiny $R^{2}=0.90\pm0.02$}](z4){{\footnotesize\lacl\,$\zz_{4}$\,}};
            \draw (-1.2, -0.5) node(x3)  {{\footnotesize\,$\xx_{3}$\,}};

            \draw (1.2, -0.5) node(x4)  {{\footnotesize\,$\xx_{4}$\,}};
            \draw (3, -0.5) node(x5)  {{\footnotesize\,$\xx_{5}$\,}};

            \draw (-3.9, -2) node(x1) {{\footnotesize\,$\xx_{1}$\,}};
            \draw (-2.1, -2) node(x2) {{\footnotesize\,$\xx_{2}$\,}};
            
           \draw[-latex] (z1) -- (z2);
           \draw[-latex] (z1) -- (z3);
           \draw[-latex] (z2) -- (z4);
           \draw[-latex] (z2) -- (x3);
           \draw[-latex] (z3) -- (x4);
           \draw[-latex] (z3) -- (x5);
           \draw[-latex] (z4) -- (x1);
           \draw[-latex] (z4) -- (x2);
        \end{tikzpicture}
        \caption{An unbalanced tree.}
        \label{subfig:unbalanced_tree}
    \end{subfigure}
    \caption{\footnotesize
        \textbf{Evaluated hierarchical models.}
        We denote the estimation $R^{2}$ scores around corresponding latent variables. 
        We can observe that all latent variables can be identified decently, justifying our theoretic results.
    }
    \label{fig:experiment_hierarchical_models}
    \vspace{-1.7em}
\end{figure*}

\vspace{-1mm}
\subsection{Experimental Setup}
\vspace{-1mm}

\paragraph{Synthetic data generation.}
For the basis model (Figure~\ref{fig:basis_model}), we sample $\zz \sim \cN( \0, \mI )$, $\ss_{1} \sim \cN (\0, \mI)$, and $ \ss_{2} \sim \cN (\mA \zz + \bb, \mI)$, where the dependence between $\zz$ and $\ss_{2}$ is implemented by randomly constructed matrix $\mA$ and bias $\bb$.
We choose the true mixing function $\gg$ as a multilayer perceptron (MLP) with Leaky-ReLU activations and well-conditioned weights to facilitate invertibility.
We apply element-wise $\max \{z, 0\}$ to $\zz$ before inputting $[\zz, \ss_{1}]$ to $g_{1}$ and element-wise $\min \{z, 0\}$ to $\zz$ before inputting $[\zz, \ss_{2}]$ to $g_{2}$, so that $\vv_{1}$ is generated by the positive elements of $\zz$ and $\vv_{2}$ is generated by the negative elements of $\zz$. This way, $g_{1}$ and $g_{2}$ are jointly invertible but not individually invertible.
For latent hierarchical models (Figure~\ref{fig:experiment_hierarchical_models}), we sample each exogenous variable $\boldsymbol\varepsilon$ as $ \boldsymbol\varepsilon \sim \cN(\0, \mI) $ and each endogenous variable $\zz$ is endowed with a distinct generating function $g_{\zz}$ parameterized by an MLP, i.e., $ \zz = g_{\zz}(\text{Pa}(\zz), \boldsymbol\varepsilon_{\zz}) $.  

\vspace{-0.9mm}
\paragraph{Real-world datasets.}
We adopt two real-world datasets with hierarchical generating processes, namely a personality dataset and a digit dataset.
The personality dataset was curated through an interactive online personality test~\citep{personalitydata}. Participants were requested to provide a rating for each question on a five-point scale.
Each question was designed to be associated with one of the five personality attributes, i.e., agreeableness, openness, conscientiousness, extraversion, and neuroticism. The corresponding answer scores are denoted as $a_{i}$, $o_{j}$, etc, as indicated in Figure~\ref{fig:personality_monovariale}.
We use responses (around 19,500 for each question) to six questions for each of the five personality attributes. For the digit dataset, 
we construct a multi-view digit dataset from MNIST~\citep{deng2012mnist}.
We first randomly crop each image to obtain two intermediate views and then randomly rotate each of the intermediate views independently to obtain four views. This procedure gives rise to a latent structure similar to that in Figure~\ref{subfig:tree}.
We feed images to a pretrained ResNet-44~\citep{he2016deep} for dimensionality reduction ($28\times 28 \to 64$) and run our algorithm on the produced features.

\vspace{-0.9mm}
\paragraph{Estimation models.}
We implement the estimation model $(\hat{g}_{1}, \hat{g}_{2}, \hat{f})$ following Equation~\ref{eq:basis_model}, where $\hat{f}$ can be seen as an encoder that transforms $(\vv_{1}, \vv_{2})$ to the latent space and $(\hat{g}_{1}, \hat{g}_{2})$ act as the decoders generating $\vv_{1}$ and $\vv_{2}$ respectively. We parameterize each module with an MLP with Leaky-ReLU activation.
Training configurations can be found in Appendix~\ref{app:experiments}.

\vspace{-0.7mm}
\paragraph{Evaluation metrics.}
Due to the block-wise nature of our identifiability results, we adopt the coefficient of determination $R^{2}$ between the estimated variables $\hat{\zz}$ and the true variables $\zz$, where $R^2=1$ suggests that the estimated variable $\hat{\zz}$ can perfectly capture the variation of the true variable $\zz$.
We apply kernel regression with Gaussian kernel to estimate the nonlinear mapping.
Such a protocol is employed in related work~\cite{von2021self,kong2022partial}.
We repeat each experiment over at least $3$ random seeds.

\subsection{Basis Model Identification} \label{subsec:basis_expepriments}

The results for the basis model are presented in Table~\ref{tab:basis_identifiability} and Figure~\ref{fig:synthetic_scatterplots}.
We vary the number of components of each latent partition ($d_{\zz}, d_{\ss_{1}}, d_{\ss_{2}}$).
We can observe that the model with individual invertibility (as assumed in prior work~\citep{von2021self,lyu2022understanding}) can only capture around half of the information in $\zz$, due to their assumption that both $g_{1}$ and $g_{2}$ are invertible, which is violated in this setup.
In contrast, our estimation approach can largely recover the information of $\zz$ across a range of latent component dimensions, verifying our Theorem~\ref{thm:sparsity}.
Moreover, prior work~\citep{von2021self} assumes $g_{1} = g_{2}$, and therefore cannot be applicable when the dimensionalities of $\ss_{1}$ and $\ss_{2}$ differ (e.g., $d_{s_{1}}=2$, $d_{s_{2}}=3$), hence the “NA” in the table.
Figure~\ref{fig:synthetic_scatterplots} in Appendix~\ref{subsec:additional_basis} shows scatter-plots of the true and the estimated components with $d_{z} = d_{s_{1}} = d_{s_{2}} = 2$.
We can observe that components of $\zz$ and those of $\hat{\zz}$ are highly correlated, suggesting that the information of $\zz$ is indeed restored.
In contrast, $\hat{\zz}$ contains very little information of $\ss_{1}$, consistent with our theory that a desirable disentanglement is attainable.
Additional experiments on varying sample sizes can be found in Appendix~\ref{subsec:additional_basis}.

\begin{table}
    \centering
    \resizebox{\columnwidth}{!}{%
    \begin{tabular}{c  c  c   c  c }
        \hline
        & $d_{z}=d_{s_{1}}=d_{s_{2}}=2$ & $d_{z}=d_{s_{1}}=2$, $d_{s_{2}}=3$ &  $d_{z}=d_{s_{1}}=d_{s_{2}}=4$ & $d_{z}=d_{s_{1}}=4$, $d_{s_{2}}=6$ \\
        \hline
        Joint invertibility (Ours) & $0.93 \pm 0.09 $ & $ 0.90 \pm 0.10 $ & $0.89 \pm 0.02$ & $0.83 \pm 0.12$ \\
        Individual invertibility & $0.67 \pm 0.06$ & NA & $0.67 \pm 0.09$ & NA \\
        \hline
    \end{tabular}
    }
    \vspace{1mm}
    \caption{\textbf{The basis model identifiability}. We show the identifiability for $\zz$ under various data dimensionalities $d_{z}$, $d_{s_{1}}$, and $d_{s_{2}}$ for $\zz$, $\ss_{1}$, $\ss_{2}$. We compare our results with prior work that assumes both $g_{1}$ and $g_{2}$ are invertible individually. NA indicates that the model is not applicable when the dimensionalities of $\ss_{1}$ and $\ss_{2}$ differ. The results are averaged over $30$ random seeds.}
    \label{tab:basis_identifiability}
\end{table}

\subsection{Hierarchical Model Identification} \label{subsec:hierarchy_experiments}

\paragraph{Synthetic data.}
We present the evaluation of Algorithm~\ref{alg:search_algo} on latent hierarchical models in Figure~\ref{fig:experiment_hierarchical_models}, Table~\ref{tab:pairwise_predictions_v_structure}, and Table~\ref{tab:pairwise_predictions_tree}-~\ref{tab:pairwise_predictions_unbalanced_tree} in Appendix~\ref{app:additional_hierarchy}.
In Figure~\ref{fig:experiment_hierarchical_models}, we can observe that all variables can be estimated decently despite a slight recovery loss from a lower to a higher level.
Table~\ref{tab:pairwise_predictions_v_structure} presents the pair-prediction scores within pairs of estimations while learning the V-structure model in Figure~\ref{subfig:tree_v}.
We can observe that scores within the sibling pairs $(\xx_{1}, \xx_{2}) $ and $ (\xx_{4}, \xx_{5})$ are much higher than non-sibling pairs.
Notably, the estimate from $\vv_{1}=\xx_{3}$ can perform accurate prediction over other estimates, whereas the other estimates fail to capture it faithfully. 
This is consistent with Theorem~\ref{thm:element_search_theory}: the basis model with $\vv_{1}=\xx_{3}$ will output a variable equivalent to the concatenation of $\zz_{2}$ and $\zz_{3}$, explaining the asymmetric prediction performances.
These results empirically corroborate Theorem~\ref{thm:overall_identifiability}.
\begin{wraptable}[13]{R}{.7\textwidth}
    \small
    \centering
    \resizebox{\linewidth}{!}{%
    \begin{tabular}{ c | c c c c c}
        \hline
        & $\xx_{1}$ & $\xx_{2}$ & $\xx_{3}$ & $\xx_{4}$ & $\xx_{5}$\\
        \hline
        $\xx_{1}$ & $\times$ & $\mathbf{0.85}\pm0.000$ & $0.53 \pm 0.01$ & $0.57\pm0.002$ & $0.55\pm0.003$ \\
        $\xx_{2}$ & $\mathbf{0.83} \pm0.006$ & $\times$ & $0.52\pm0.002$ & $0.54\pm0.001$ & $0.53 \pm 0.000$ \\
        $\xx_{3}$ & $\mathbf{0.90}\pm0.002$ & $\mathbf{0.88}\pm0.002$ & $\times$ & $\mathbf{0.86}\pm0.001$ & $\mathbf{0.90} \pm 0.006$ \\
        $\xx_{4}$ & $0.57\pm0.001$ & $0.54\pm0.002$ & $ 0.55 \pm 0.003 $ & $\times$ & $\mathbf{0.86}\pm0.003$ \\
        $\xx_{5}$ & $0.55\pm0.006$ & $0.56\pm0.001$ & $0.55 \pm 0.002$ & $\mathbf{0.83}\pm0.003$ & $\times$  \\
        \hline
    \end{tabular}
    }
    \caption{\footnotesize
       \textbf{Pairwise predictions among estimated variables in Figure~\ref{subfig:tree_v}}. Each box $(\aa, \bb)$ shows the $R^{2}$ score obtained applying the estimated variable produced by treating $\aa$ as $\vv_{1}$ to predict that produced by treating $\bb$ as $\vv_{1}$.
       We observe that the prediction scores within sibling pairs are noticeably higher than other pairs, suggesting a decent structure estimation.
       In particular, the estimate from $\vv_{1}=\xx_{3}$ can predict other estimates accurately, whereas not the other way round, confirming our theory that $\vv_{1}=\xx_{3}$ will recover the information of both $\zz_{2}$ and $\zz_{3}$. The results are averaged over $30$ random seeds.
    }
    \label{tab:pairwise_predictions_v_structure}
\end{wraptable}

\paragraph{Personality dataset.} From Figure~\ref{fig:personality_monovariale}, we can observe that our nonlinear model can correctly cluster the variables associated with the same attribute together in the first level, which is consistent with the intentions of these questions.
It suggests that conscientiousness, agreeableness, and neuroticism are closely related at the intermediate level, whereas extraversion and neuroticism are remotely related.
Some observed variables are not shown in the figure, as they are not clustered with other variables, indicating that they are not closely related to the system.
This may provide insights into questionnaire design.
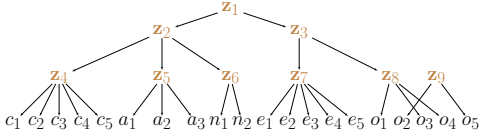
\begin{figure}[t]
\vspace{-1em}
    \centering
    \begin{minipage}[c]{0.46\textwidth}
    \resizebox{\linewidth}{!}{%
        \begin{tikzpicture}[scale=1, line width=0.4pt, inner sep=0.1mm, shorten >=.1pt, shorten <=.1pt]
            \draw (0, 3) node(z1)  {{\Huge\lacl\,{$\zz_1$}\,}};

            \draw (-3, 2) node(z2)  {{\Huge\lacl\,$\zz_2$\,}};
            \draw (3, 2) node(z3)  {{\Huge\lacl\,$\zz_{3}$\,}};

            \draw (-7.5, 0) node(z4)  {{\Huge\lacl\,$\zz_4$\,}};
            \draw (-3, 0) node(z5)  {{\Huge\lacl\,$\zz_5$\,}};
            \draw (0, 0) node(z6)  {{\Huge\lacl\,$\zz_6$\,}};
            \draw (3, 0) node(z7)  {{\Huge\lacl\,$\zz_7$\,}};
            \draw (7, 0) node(z8)  {{\Huge\lacl\,$\zz_8$\,}};
            \draw (9, 0) node(z9)  {{\Huge\lacl\,$\zz_9$\,}};

            \draw (-9.5, -2) node(c11)  {{\Huge\,$c_{1}$\,}};
            \draw (-8.5, -2) node(c12)  {{\Huge\,$c_{2}$\,}};
            \draw (-7.5, -2) node(c21)  {{\Huge\,$c_{3}$\,}};
            \draw (-6.5, -2) node(c22)  {{\Huge\,$c_{4}$\,}};
            \draw (-5.5, -2) node(c23)  {{\Huge\,$c_{5}$\,}};

            \draw (-4.5, -2) node(a11)  {{\Huge\,$a_{1}$\,}};
            \draw (-3, -2) node(a12)  {{\Huge\,$a_{2}$\,}};
            \draw (-1.5, -2) node(a22)  {{\Huge\,$a_{3}$\,}};

            \draw (-.5, -2) node(n11)  {{\Huge\,$n_{1}$\,}};
            \draw (.5, -2) node(n23)  {{\Huge\,$n_{2}$\,}};

            \draw (1.5, -2) node(e11)  {{\Huge\,$e_{1}$\,}};
            \draw (2.5, -2) node(e12)  {{\Huge\,$e_{2}$\,}};
            \draw (3.5, -2) node(e13)  {{\Huge\,$e_{3}$\,}};
            \draw (4.5, -2) node(e22)  {{\Huge\,$e_{4}$\,}};
            \draw (5.5, -2) node(e23)  {{\Huge\,$e_{5}$\,}};

            \draw (6.5, -2) node(o11)  {{\Huge\,$o_{1}$\,}};
            \draw (7.5, -2) node(o12)  {{\Huge\,$o_{2}$\,}};
            \draw (8.5, -2) node(o13)  {{\Huge\,$o_{3}$\,}};
            \draw (9.5, -2) node(o22)  {{\Huge\,$o_{4}$\,}};
            \draw (10.5, -2) node(o23)  {{\Huge\,$o_{5}$\,}};

            
           \draw[-latex] (z1) -- (z2);
           \draw[-latex] (z1) -- (z3);
           
           \draw[-latex] (z2) -- (z4);
           \draw[-latex] (z2) -- (z5);
           \draw[-latex] (z2) -- (z6);
           \draw[-latex] (z3) -- (z7);
            \draw[-latex] (z3) -- (z8);

           \draw[-latex] (z4) -- (c11);
            \draw[-latex] (z4) -- (c12);
           \draw[-latex] (z4) -- (c21);
           \draw[-latex] (z4) -- (c22);
           \draw[-latex] (z4) -- (c23);
           
           \draw[-latex] (z5) -- (a11);
           \draw[-latex] (z5) -- (a12);
           \draw[-latex] (z5) -- (a22);

            \draw[-latex] (z6) -- (n11);
           \draw[-latex] (z6) -- (n23);

           \draw[-latex] (z7) -- (e11);
           \draw[-latex] (z7) -- (e12);
           \draw[-latex] (z7) -- (e13);
           \draw[-latex] (z7) -- (e22);
           \draw[-latex] (z7) -- (e23);
           
           \draw[-latex] (z8) -- (o11);
           \draw[-latex] (z8) -- (o13);
           \draw[-latex] (z8) -- (o22);

           \draw[-latex] (z9) -- (o12);
           \draw[-latex] (z9) -- (o23);

        \end{tikzpicture}
    }
    \end{minipage}
    \hfill
    \begin{minipage}[c]{0.5\textwidth}
        \caption{\footnotesize
        \textbf{The causal structure of the personality dataset learned by our method}. 
        The Letter in each variable name indicates the personality attribute. Subscripts correspond to question indices.
        Some observed variables are not shown in the figure, as they are not clustered with other variables, suggesting their distant relation to the system.}
        \label{fig:personality_monovariale}
    \end{minipage}
  \vspace{-1.2em}
\end{figure}

\paragraph{Digit dataset.}
Figure~\ref{fig:digits_structures_c_r} and Table~\ref{tab:pairwise_predictions_digits_c_r} present the causal structure learned from the multi-view digit dataset.
We can observe that we can automatically cluster the two views sharing more latent factors.
This showcases that our theory and approach can handle high-dimensional variables, whereas prior causal structure learning work mostly assumes that all variables are one-dimensional.

\begin{figure}
  \centering
  \begin{subfigure}[t]{0.42\textwidth}
    \centering
    \includegraphics[width=\linewidth]{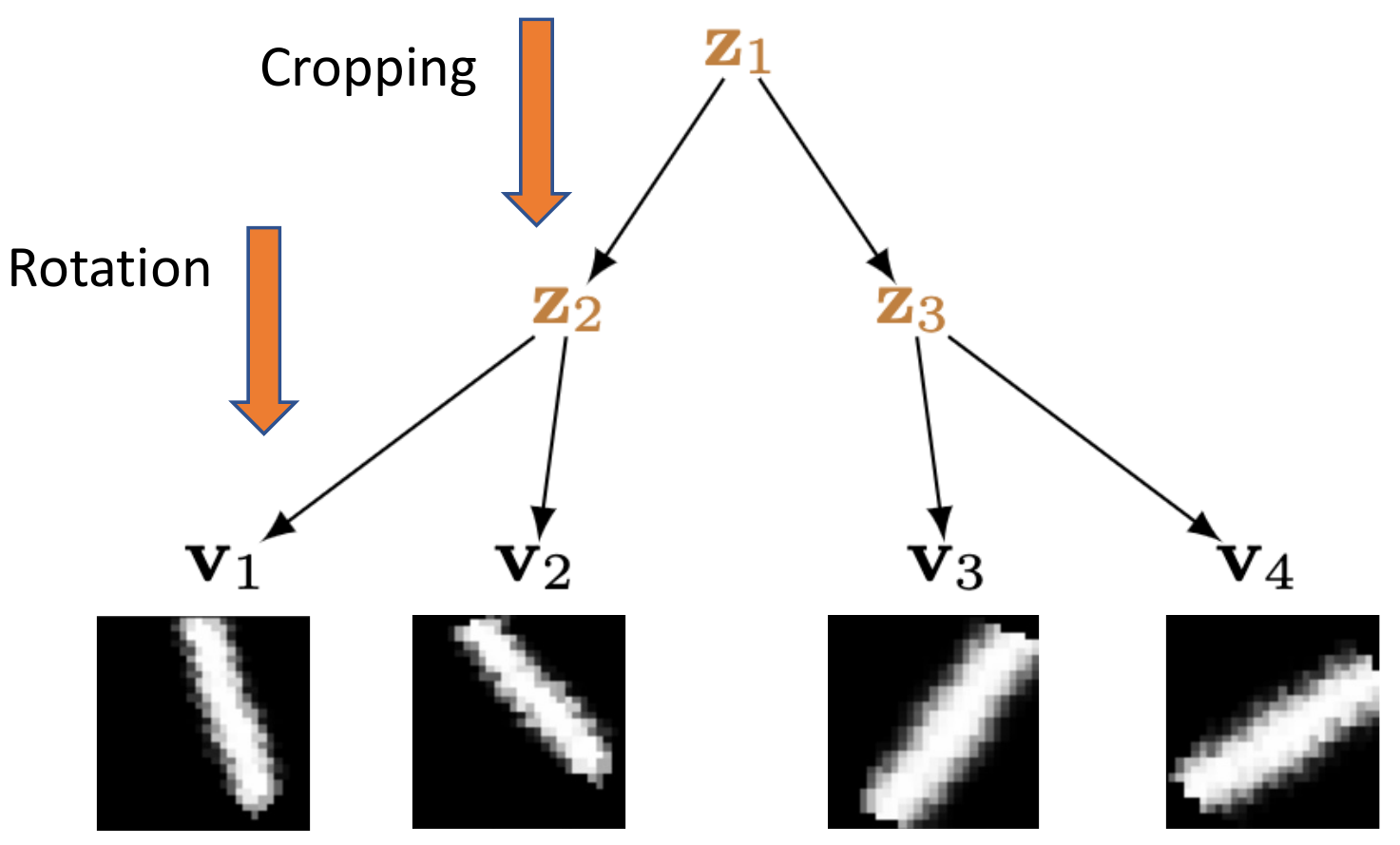}
    \caption{\scriptsize\textbf{The learned causal structure.} }
    \label{fig:digits_structures_c_r}
  \end{subfigure}%
  \hspace{4em}
  \begin{subfigure}[t]{0.46\textwidth}
    \vspace{-4.5em}
    \centering
    \resizebox{\linewidth}{!}{
    \begin{tabular}{ c | c c c c }
        \hline
        & $\vv_{1}$ & $\vv_{2}$ & $\vv_{3}$ & $\vv_{4}$\\
        \hline
        $\vv_{1}$ & $\times$ & $\mathbf{0.69} \pm0.001$ & $0.51 \pm 0.01$ & $0.51\pm0.005$ \\
        $\vv_{2}$ & $\mathbf{0.65} \pm0.02$ & $\times$ & $0.48\pm0.01$ & $0.47\pm0.03$ \\
        $\vv_{3}$ & $0.48\pm0.02$ & $0.42\pm0.03$ & $\times$ & $\mathbf{0.78}\pm0.05$ \\
        $\vv_{4}$ & $0.53\pm0.08$ & $0.48\pm0.02$ & $ \mathbf{0.75} \pm 0.05 $ & $\times$  \\
        \hline
    \end{tabular}
    }
    \vspace{.5em}
    \caption{\scriptsize
       \textbf{Pairwise predictions.}
    }
    \label{tab:pairwise_predictions_digits_c_r}
  \end{subfigure}
  \caption{\footnotesize
  \textbf{Multi-view digit dataset results.} Figure~\ref{fig:digits_structures_c_r}: $(\vv_{1}, \vv_{2})$ and $ (\vv_{3}, \vv_{4}) $ form two clusters, as they share the aspect-ratio factor within the cluster while distinct in the rotation-angle factor. Table~\ref{tab:pairwise_predictions_digits_c_r}: each box $(\aa, \bb)$ shows the $R^{2}$ score obtained by applying the estimated variable produced by treating one specific view $\aa$ as $\vv_{1}$ to predict the estimated variable produced by treating view $\bb$ as $\vv_{1}$.}
  \vspace{-1.8em}
\end{figure}

\vspace{-1.5mm}
\section{Conclusion}
\vspace{-1mm}

In this work, we investigate the identifiability of causal structures and latent variables in nonlinear latent hierarchical models.
We provide identifiability theory for both the causal structures and latent variables without assuming linearity/discreteness as in prior work~\citep{Pearl88,choi2011learning,huang2022latent,xie2022identification} while admitting structures more general than (generalized) latent trees~\citep{Pearl88,choi2011learning,huang2022latent}.
Together with the theory, we devise an identification algorithm and validate it across multiple synthetic and real-world datasets.

We have shown that our algorithm yields informative results across various datasets. However, it is essential to acknowledge that its primary role is as a theoretical device for our identification proof. 
It may not be well-suited to large-scale datasets, e.g., ImageNet, due to its nature as a discrete search algorithm.
In future research, we aim to integrate our theoretical insights into scalable continuous-optimization-based algorithms~\citep{zheng2018dags} and deep learning. 
We believe that our work facilitates the understanding of the underlying structure of highly complex and high-dimensional datasets, which is the foundation for creating more interpretable, safer, and principled machine learning systems.

\paragraph{Acknowledgment.}
We thank anonymous reviewers for their constructive feedback.
The work of LK and YC is supported in part by NSF under the grants CCF-1901199 and DMS-2134080.
This project is also partially supported by NSF Grant 2229881, the National Institutes of Health (NIH) under Contract R01HL159805, a grant from Apple Inc., a grant from KDDI Research Inc., and generous gifts from Salesforce Inc., Microsoft Research, and Amazon Research.

\clearpage


\bibliography{references}
\bibliographystyle{abbrvnat}

\newpage
\appendix
\onecolumn

\section*{\Large Appendix for ``Identification of Nonlinear Latent Hierarchical Models"}

\section{Detailed Literature Review} \label{sec:related_work}

Previous causal discovery approaches, which allow latent confounders and causal relationships among those latent variables, either assume linear causal relationships or assume discrete data. Representative approaches along this line include rank deficiency-based methods, matrix decomposition-based methods, generalized independent noise condition-based methods, and mixture oracles-based methods. (1) Rank deficiency-based. By testing the rank deficiency over cross-covariance matrices over observed variables, one is able to locate latent variables and identify the causal relationships among them in linear-Gaussian models. \citet{Silva-linearlvModel, Kummerfeld2016} make use of the Tetrad condition, a special case of the rank-deficiency constraints, to handle the case where each observed variable is influenced by only one latent parent, and each latent variable has at least three pure measured children. The Tetrad condition has also been used to identify a latent tree structure \citep{Pearl88}. Recently, the rank-deficiency constraints have been extended to identify more general hierarchical structures \citep{huang2022latent}. (2) Matrix decomposition-based. It has been shown that, under certain conditions, the precision matrix can be decomposed into a low-rank matrix and a sparse matrix, where the low-rank matrix represents the causal structure from latent variables to observed variables and the sparse matrix gives the structural relationships over observed variables. To achieve such decomposition, certain assumptions are imposed on the structure \citep{RankSparsity_11, RankSparsity_12, anandkumar2013learning}, e.g., there should be three times more measured variables than latent variables. (3) Generalized independent noise (GIN) condition-based. The GIN condition is an extension of the independent noise condition in the existence of latent confounders, relying on higher-order statistics to identify latent structures. In particular, \citet{xie2020generalized} proposes a GIN-based approach that allows multiple latent parents behind every pair of observed variables and can identify causal directions among latent variables. Moreover, \citet{adams2021identification} gives necessary and sufficient structural constraints in the linear, non-Gaussian or heterogeneous case, to identify the latent structures. (4) Mixture oracles-based method-based. Recently, \citet{kivva2021learning} proposed a mixture oracles-based method to identify the latent variable graph that allows nonlinear causal relationships. However, it requires that the latent variables are discrete and each latent variable has measured variables as children. Thanks to the discreteness assumption, it can handle general DAGs over latent variables.
On the other hand, regarding the scenario of latent hierarchical structures, most previous work along this line assumes a tree structure and requires that each variable has only one dimension and that the data is either linear-Gaussian or discrete \citep{Pearl88,zhang2004hierarchical,choi2011learning,drton2017marginal,huang2020guaranteed}.
In contrast, we address the general nonlinear case with continuous variables.
Moreover, our conditions allow for multiple undirected paths between two variables and thus are more general than tree-based assumptions in prior work.

Another related research line is latent-variable identifiability literature.
\citet{hyvarinen2019nonlinear,khemakhem2020variational,kong2022partial} have shown that with an additional observed variable to modulate latent independent variables, the latent independent variables are identifiable. 
Recently, \citet{yao2021learning,yao2022learning} allow time-delayed causal relationships among latent variables. However, for time-delayed causal relations, the causal direction is fixed and predefined, and moreover, they assume that all latent variables have measured variables as children, avoiding the hierarchical cases.
Prior work\citep{von2021self,lyu2022understanding} studies latent-variable models related to our proposed basis model (which serves as a tool and is defined in Section~\ref{sec:formulation}) in this work, but with more restrictive functional and statistical assumptions.
To the best of our knowledge, no prior work has managed to identify latent variables or causal structures in nonlinear latent hierarchical models.

We note that our work focuses on identifying latent causal models from observational data and structure conditions. Another important line of work~\citep{ahuja2023interventional,squires2023linear,varici2023scorebased,jiang2023learning,liang2023causal,buchholz2023learning,zhang2023identifiability} utilizes interventional data for this purpose. Specifically, these works leverage multiple data distributions generated by one causal model under distinct interventions, which are accessible in applications like biological experiments. The accessibility of interventional data can allow for relaxed structure conditions.
Hence, one should consider this tradeoff when faced with causal identification problems.

\section{Proofs}

\subsection{Proof for Theorem~\ref{thm:sparsity}}~\label{subsec:proof_basis}

The original assumptions and theorem are replicated here for reference.

In Theorem~\ref{thm:sparsity}, we show that the latent variable $\zz$ shared by $\vv_{1}$ and $\vv_{2}$ is identifiable up to a one-to-one mapping by estimating a generative model $( \hat{p}_{\zz, \ss_{2}}, \hat{p}_{\ss_{1}}, \hat{g} )$ according to Equation~\ref{eq:basis_model}. 
We denote the support of Jacobian matrix $\mJ_{g}$ and $\mJ_{\hat{g}}$ as $\cG$ and $\hat{\cG}$ respectively and denote by $\mT$ a matrix with the same support as $\mT(\cc)$ in $\mJ_{\hat{g}} (\hat{\cc}) = \mJ_{g} (\cc) \mT(\cc)$. 

\basisassumption*
\basistheorem*

\begin{proof}
    We first define the indeterminacy function:
    \begin{align*}
        h := \hat{g}^{-1} \circ g,
    \end{align*}
    which is a smooth and invertible function $ h: \cC \to \hat{\cC} $ thanks to Assumption~\ref{asmp:basis_iden}-\ref{asmp:invertibility}.
    According to the chain rule and inverse function theorem, we have the following relation among the Jacobian matrices:
    \begin{align} \label{eq:jacobian_multiplication}
        \mJ_{\hat{g}} (\hat{\cc}) = \mJ_{g} (\cc) \cdot \mJ_{h}^{-1} (\cc).
    \end{align}
    For ease of exposition, we denote $ \mM(\cc):= \mJ_{h}^{-1} (\cc) $ in the following.

    We define the support notations as follows:
    \begin{align*}
        \cG &:= \text{supp}( \mJ_{g} ), \\
        \hat{\cG} &:= \text{supp}( \mJ_{\hat{g}} ), \\
        \cT &:= \text{supp} (\mM).
    \end{align*}

    Because of Assumption~\ref{asmp:basis_iden}-\ref{asmp:span}, for any $i \in \{1, \dots, d_{v_{1}} + d_{v_{2}} \} $, there exists $ \{ \cc^{(\ell)} \}_{\ell = 1}^{ \abs{ \cG_{i, :} } } $, such that $ \text{span}( \{ \mJ_{g} (\cc^{(\ell)})_{i,:} \}_{\ell = 1}^{ \abs{ \cG_{i, :} } } ) = \R^{d_{z}}_{ \cG_{i, :} } $. 
    
    Since $ \{ \mJ_{g} (\cc^{(\ell)})_{i,:} \}_{\ell = 1}^{ \abs{ \cG_{i, :} } } $ forms a basis of $ \R^{d_{c}}_{ \cG_{i, :} } $, for any $j_{0} \in \cG_{i,:} $, we can express canonical basis vector $ \ee_{j_{0}} \in \R^{d_{c}}_{ \cG_{i, :} } $ as:
    \begin{align}
        \ee_{j_{0}} = \sum_{ \ell \in \cG_{i,:} } \alpha_{\ell} \cdot \mJ_{g} ( \cc^{(\ell)} )_{i,:},
    \end{align}
    where $ \alpha_{\ell} \in \R $ is a coefficient.

    Then, following Assumption~\ref{asmp:basis_iden}-\ref{asmp:span}, there exists a deterministic matrix $\mT$ such that
    \begin{align}
        \mT_{j_{0}, :} = \ee_{j_{0}}^{\top} \mT = \sum_{ \ell \in \cG_{i,:} } \alpha_{\ell} \cdot \mJ_{g} ( \cc^{(\ell)} )_{i,:} \mT \in \R^{d_{c}}_{ \hat{\cG}_{i, :}},
    \end{align}
    where $ \in $ is due to the fact that each element in the summation belongs to $ \R^{d_{c}}_{ \hat{\cG}_{i, :}} $.

    Therefore, 
    \begin{align*}
        \forall j \in \cG_{i,:}, \mT_{j,:} \in \R^{d_{c}}_{ \hat{\cG}_{i, :}}.
    \end{align*}
    Equivalently, we have:
    \begin{align} \label{eq:connection}
        \forall (i, j) \in \cG, \quad \{ i\} \times \cT_{j,:} \subset \hat{\cG}.
    \end{align}

    We would like to show that $\hat{\zz}$ is not influenced by $ \ss_{1} $ and $\ss_{2}$, which is equivalent to $ \mT_{j_{z}, j_{\hat{s}}} = 0 $ for $ j_{z} \in \{1, \dots, d_{z} \} $  and $ j_{\hat{s}} \in \{d_{z}+1, \dots, d_{c}\} $.

    We first will prove this for $ j_{\hat{s}} \in \{ d_{z}+1, \dots, d_{z}+d_{s_{1}}\} $ by contradiction.
    Suppose that $ \exists ( j_{z}, j_{\hat{s}_{1}} ) \in \cT $ with $ j_{z} \in \{ 1, \dots, d_{z}\} $ and $ j_{\hat{s}_{1}} \in \{ d_{z}+1, \dots, d_{z}+d_{s_{1}}\} $.

    Thanks to Assumption~\ref{asmp:basis_iden}-\ref{asmp:connectivity}, there must exist $ i_{v_{2}} \in \{d_{v_{1}}+1, \dots, d_{v_{1}} + d_{v_{2}} \} $, such that, $ (i_{v_{2}}, j_{z}) \in \cG $.

    It follows from Equation~\ref{eq:connection} that:
    \begin{align}
        \{ i_{v_{2}} \} \times \cT_{j_z, :} \subset \hat{\cG} \implies ( i_{v_{2}}, j_{\hat{s}_{1}}) \in \hat{\cG}.
    \end{align}
    However, due to the structure of $\hat{g}_{2}$, $ [\mJ_{\hat{g}_{2}}]_{ i_{v_{2}}, j_{\hat{s}_{1}} } = 0 $, which results in a contradiction.
    Therefore, such $ (i_{v_{2}}, j_{\hat{s}_{1}}) $ does not exist.
    The same reasoning gives rise to that $ (j_{z}, j_{\hat{s}_{2}}) \notin \cT $ with $ j_{z} \in \{1, \dots, d_{z} \} $ and $ j_{\hat{s}_{2}} \in \{d_{z} + d_{s_{1}} +1, \dots, d_{c} \} $.
    Therefore, we have shown that $ \mT_{j_{z}, j_{\hat{s}}} = 0 $ for $ j_{z} \in \{1, \dots, d_{z} \} $  and $ j_{\hat{s}} \in \{d_{z}+1, \dots, d_{c}\} $.
    As $\cT$ is invertible, it follows from the block-matrix inverse formulae that $ \mT_{j_{\hat{{z}}, j_{s}}} = 0 $ for $ j_{\hat{z}} \in \{1, \dots, d_{z} \} $ and $ j_{s} \in \{d_{z}+1, \dots, d_{c}\} $ 
    In conclusion, $ \hat{\zz} $ is not influenced by $ (\ss_{1}, \ss_{2}) $.
    Thus, we have shown that there is a one-to-one mapping between $ \zz $ and $ \hat{\zz} $.
\end{proof}

\subsection{Proof for Theorem~\ref{thm:element_search_theory}} \label{subsec:proof_basis_search}
The original theorem is copied below.
\globalassumption*
\localtheorem*

\begin{proof}

    We show that performing estimation following the generating process Equation~\ref{eq:basis_model} to any $ \vv_{1} = \xx_{i} \in \mX $ and $ \vv_{2} = \mX \setminus \vv_{1} $ is equivalent to the estimation of the model in Figure~\ref{fig:basis_model} with $ \zz = \text{Pa} (\vv_{1}) $. 
    Therefore, the identifiability result ensues, thanks to Theorem~\ref{thm:sparsity}.

    First, we show that for each selection of $\vv_{1}$, we can locate $ (\zz, \ss_{1}, \ss_{2}) $ such that the conditions for the basis model in Theorem~\ref{thm:sparsity} are satisfied.
    We choose $ (\zz, \ss_{1}, \ss_{2}) $ as follows.
    \begin{itemize}
        \item We choose the $\zz$ to be all parents of $\vv_{1}$: $\zz := \text{Pa} (\vv_{1}) $
        \item We choose $\ss_{1}$ to be exogenous variables that cause $\vv_{1}$ together with $\zz$: $ \ss_{1} := \bm\varepsilon_{\vv_{1}} $.
        \item To obtain $\ss_{2}$, we trace back (traversing backward every edge) from the $\vv_{2}$ recursively and execute the following steps. At each step of backtracking, we include any cause (exogenous variables and endogenous variables) that is not situated on any directed path from $\zz$ to $\vv_{2}$. The backtracking for each path halts when the most recent step recovers an endogenous variable out of the directed paths, or it reaches $\zz$.
        We note that such a choice for $\ss_{2}$ is not unique -- the above procedure is only one instance.
    \end{itemize}

    \paragraph{Invertibility.}
    From $(\zz, \ss_{1}, \ss_{2})$ to $(\vv_{1}, \vv_{2})$, we can observe that $(\zz, \ss_{1}, \ss_{2})$ by construction include all the information to generate $(\vv_{1}, \vv_{2})$, i.e., $\mX$, as $(\zz, \ss_{1}, \ss_{2})$ d-separate their parents/ancestors from $(\vv_{1},\vv_{2})$ and contain all necessary exogenous variables.
    From $(\vv_{1}, \vv_{2})$ to $(\zz, \ss_{1}, \ss_{2})$, we can observe that as the tuple $ (\vv_{1}, \vv_{2}) $ comprises the entire observable set $\mX$, which contains the information of all latent exogenous and endogenous variables according to the general invertibility (Assumption~\ref{asmp:global_iden}-\ref{asmp:global_invertiblity}) of the hierarchical model.
    Therefore, the mapping is invertible.

    \paragraph{Conditional independence: $ \vv_{1} \ind \vv_{2} | \zz $.}
    As $\zz$ is chosen to be the parents of $\vv_{1}$ and there is no edge among the observables $\mX$ (i.e., leaf variables), the local Markov property implies conditional independence.

    \paragraph{Subspace span.}
    As $(\vv_{1}, \vv_{2}):= \mX$ satisfies the tree conditions in Assumption~\ref{asmp:global_iden}-\ref{asmp:global_span} w.r.t., $\zz: = \text{Pa}(\zz)$, the subspace span condition follows from Assumption~\ref{asmp:global_iden}-\ref{asmp:global_span}.

    \paragraph{Edge connectivity.} 
    We assume that in the hierarchical model, the function between each latent variable $\zz$ and each of its pure child $\zz'$ has a non-degenerate Jacobian matrix in a sense that for all $ j \in \{ 1, \dots, d_{z} \} $, there exists $ i \in \{1, \dots, d_{z'} \} $, such that $ (i ,j) $ is included in the Jacobian matrix's support (i.e., Assumption~\ref{asmp:global_iden}-\ref{asmp:global_connectivity}).
    Therefore, since each latent variable has at least $2$ pure children (i.e., Condition~\ref{condition:structural_conditions}-\ref{asmp:pure_children}) and $ \vv_{1} $ contains $1$ variable, at least $1$ pure child of $\zz$ or a descendant of a pure child of $\zz$ will appear in $\vv_{2}$.
    Thus, for each $ j \in \{1, \dots, d_{c} \} $, there exist $ i_{v_{1}} $ and $ i_{v_{2}} $ such that both $ (i_{v_{1}}, j) $ and $ (i_{v_{2}}, j) $ are contained in the support of the Jacobian matrix, which fulfills the edge connectivity condition (i.e., Assumption~\ref{asmp:basis_iden}-\ref{asmp:connectivity}).

    Thus, it follows from Theorem~\ref{thm:sparsity} that the estimated variable $\hat{\zz}$ is a one-to-one mapping to the true variable $\zz$.

\end{proof}

\subsection{Proof for Theorem~\ref{thm:overall_identifiability}}~\label{subsec:proof_overall_identifiability}

\globaltheorem*

\begin{proof}
    We will show by induction that each iteration of Algorithm~\ref{alg:search_algo} fulfills the following conditions:

    \begin{condition}[Active-set conditions] \label{condition:active_set} {\ }
    \vspace{-.7em}
    \begin{enumerate}[label=\roman*,leftmargin=2em,itemsep=0.5em]
        \item \label{condition:distinct_correspondence} Each element in the active set $\mA$ is a one-to-one mapping of a distinct variable in $ \mY $.
        \item \label{condition:no_directed_path} There are no directed paths among variables in $\mA$.
        \item \label{condition:d_separation} The graph is d-separated by $\mA$ into latent variables $\mZ_{\mA}$ that have not been included in $\mA $ and those $\mZ_{\bar{\mA}}$ that were in $\mA$ (but not now): $\mZ_{\mA} \ind \mZ_{\bar{\mA}} | \mA$.
    \end{enumerate}
    \end{condition}
    
    We will first verify the base case where active set $ \mA $ is assigned observable set $\mX $.
    Condition~\ref{condition:active_set}-\ref{condition:distinct_correspondence} is automatically satisfied due to the initial assignment.
    As $\mX$ are all leaf variables, there are no directed paths within $\mX$, and therefore Condition~\ref{condition:active_set}-\ref{condition:no_directed_path} is satisfied.
    Condition~\ref{condition:active_set}-\ref{condition:d_separation} is also met trivially, as $ \mZ_{\bar{\mA}} = \emptyset $ at the initial step.
    So far, we have verified the base case.

    Now, we make the inductive hypothesis that all conditions hold before an iteration and show that these conditions are maintained after the iteration.

    We first note that Condition~\ref{condition:active_set}-\ref{condition:distinct_correspondence}, Condition~\ref{condition:active_set}-\ref{condition:no_directed_path}, and Condition~\ref{condition:active_set}-\ref{condition:d_separation} in the inductive hypothesis enable us to apply Theorem~\ref{thm:element_search_theory} to the modified graph consisting of variables in $\mA$ as the bottom layer and all latent variables that have not been placed in $\mA$.
    This modified graph satisfies the structural properties required by Theorem~\ref{thm:element_search_theory}: 
    \begin{itemize}
        \item All paths end at active (observed) variables.
        \item There are no directed paths among the active variables.
    \end{itemize}

    We now analyze the updates to $\mA$ made at Step~\ref{step:active_set_updates} in Algorithm~\ref{alg:search_algo} after each iteration.
    For a specific variable $\zz \in \textbf{P}(\mA)$, there are the following cases.
    \begin{case} [One-iteration cases] \label{cases:one_iter} {\ }
    \vspace{-0.5em}
    \begin{enumerate}[label=\roman*,leftmargin=2em,itemsep=0em]
        \item \label{case:pure_all} $\zz$ has only pure children and all of them are in $\mA$.
        \item \label{case:pure_partial} $\zz$ has only pure children and not all of them are in $\mA$.
        \item \label{case:hybrid_all} $\zz$ has coparents and all its children and its co-parents' children are in $\mA$.
        \item \label{case:hybrid_partial} $\zz$ has coparents and not all its children and its co-parents' children are in $\mA$.
    \end{enumerate}
    \end{case}
    For Case~\ref{cases:one_iter}-\ref{case:pure_all}, when each pure child $\aa$ of $\zz$ is treated as $\vv_{1}$, the basis model will yield $\zz$. 
    So, there are certainly estimates of $\zz$ in $\textbf{P} (\aa)$.
    As $\zz$ possesses at least $2$ pure children, all of which belong to $\mA$, there will be duplicates, i.e., multiple equivalent one-to-one mappings of $\zz$, which we merge into one at Step~\ref{step:merge_estimations}.
    Therefore, for each pure child $\aa$ of $\zz$, the only element in $\textbf{P}(\aa)$ is a identical one-to-one mapping of $\zz$. 
    It follows that $\textbf{JointP}$ contains a pair $ (\mZ, \mH) $ where $\mZ $ consists of $\zz$ alone and $\mH$ contains all its pure children.
    This fact, together with Condition~\ref{condition:active_set}-\ref{condition:d_separation}, implies that $\hat{\zz}_{\text{test}}$ would be the parent of $\zz$, which cannot perfectly predict $\zz$ at Step~\ref{step:test_prediction}.
    Therefore, Algorithm~\ref{alg:search_algo} will substitute all the pure children of $\zz$ with the estimate of $\zz$ in $\mA$. 

    We now switch to Case~\ref{cases:one_iter}-\ref{case:pure_partial}.
    Analogous to the process in Case~\ref{cases:one_iter}-\ref{case:pure_all}, after Step~\ref{step:merge_estimations} we will have $\textbf{P}(\aa) = \{ \zz \}$ for each active pure child.
    However, as not all pure children of $\zz$ are active, $ \textbf{JointP} ( \{ \zz \} ) $ does not include all of the pure children of $\zz$ and $ \mA \setminus \mH $ certainly contains descendants of $\zz$.
    It follows from Corollary~\ref{corollary:test_criterion} that $ \hat{\zz}_{\text{test}}$ will be a one-to-one mapping of $\zz$ and can predict it perfectly at Step~\ref{step:test_prediction}.
    Therefore, $\zz$ will not be updated to $\mA$ at this iteration.

    For Case~\ref{cases:one_iter}-\ref{case:hybrid_all}, we will show that the update takes place for $\zz$ and its coparents.
    Without loss of generality, we assume that $\zz$ only has one coparent $\zz'$.
    As both $\zz$ has multiple pure children (i.e., Condition~\ref{condition:structural_conditions}-\ref{asmp:pure_children}) and all of them are in $\mA$, the dictionary $\textbf{P}$ will contain $ \zz $ and $ \zz' $ in their keys and $ \textbf{P}(\zz) $ and $ \textbf{P}(\zz') $ contain their pure children respectively. 
    Suppose $\aa$ is a shared child of $\zz$ and $\zz'$, then $\textbf{P}(\aa)$ would be a singleton set containing a one-to-one mapping of $ \zz'' := [\zz, \zz'] $, after Step~\ref{step:merge_estimations}.
    At Step~\ref{step:supernode_breakdown}, $\zz''$ will be recognized as a combination of $\zz$ and $\zz'$ and replaced with them.
    It follows that $\textbf{JointP}$ will contain a pair $ (\mZ, \mH) $ such that $ \mZ = (\zz, \zz') $ and $ \mH $ is composed of all the children of the two coparents.
    At Step~\ref{step:active_set_updates}, both $ \zz $ and $\zz'$ will be updated into $\mA$. 
    Cases with more than one coparents can be dealt with in the same manner.
    
    For Case~\ref{cases:one_iter}-\ref{case:hybrid_partial}, we show that the update will not be carried out for $\zz$ and its coparents.
    Following the notation in Case~\ref{cases:one_iter}-\ref{case:hybrid_all}, we assume that $\zz$ has a coparent $\zz'$, and these two share at least one active child.
    If neither $\zz$ nor $\zz'$ has any pure active children, they will be regarded as a supernode $ \zz'' = [\zz, \zz'] $ and the corresponding $\mH$ contains only the shared active children.
    Then, $ \mA \setminus \mH $ will certainly contain descendants of both $\zz$ and $\zz'$.
    Consequently, $\hat{\zz}_{\text{test}}$ will be equivalent to the supernode $\zz''$ and predicts it perfectly at Step~\ref{step:test_prediction}.
    In this case, neither $\zz$ and $\zz'$ will be updated into $\mA$.
    If only one of $\zz$ and $\zz'$ has no pure children in $\mA$, then neither will be updated into $\mA$ due to Step~\ref{step:supernode_breakdown}.
    Without loss of generality, we assume that $\zz'$ does not have any active pure children, whereas $\zz$ does.
    Then, there will be no estimated $\zz'$ in the values of $\textbf{P}$, but there will be estimated $\zz$ and $ (\zz, \zz') $.
    As a results, Step~\ref{step:supernode_breakdown} will prevent $\zz$ and $\zz'$ from updated to $\mA$.
    If both $ \zz $ and $ \zz' $ have active pure children, then there will be a pair $ (\mZ, \mH) $ in $ \textbf{JointP} $ such that $\mZ = \{ \zz, \zz' \}$ and $ \mH  $ contains all their active children.
    As $ \mH $ does not contain all the children of $\zz$ and $\zz'$, the $ \mA \setminus \mH $ will contain descendants of at least one of $ \zz $ and $\zz'$.
    Then $ \hat{\zz}_{\text{test}} $ will contain all information of at least one of $\zz$ and $\zz'$ and predicts it perfectly at Step~\ref{step:active_set_updates}, which aborts the update of $ \zz $ and $ \zz' $.
    Therefore, no updates will happen in this case.

    As each of the newly estimated variables in the values of $\textbf{JointP}$ is a one-to-one mapping to a distinct variable in $\mZ$, the iteration preserves Condition~\ref{condition:active_set}-\ref{condition:distinct_correspondence}.

    For Condition~\ref{condition:active_set}-\ref{condition:no_directed_path}, we can observe that if $\zz$ is updated into $\mA$ at this iteration, all its children would be in $\mA$ at the beginning of this iteration and get removed at Step~\ref{step:active_set_updates}. Therefore, the update would not introduce directed paths to $\mA$.
    Condition~\ref{condition:active_set}-\ref{condition:d_separation} holds for the same reason: the newly introduced variables in the active set $\mA$ at the end of the iteration separate their children in the active set $\mA$ at the beginning of the iteration and all undiscovered latent variables.

    So far, by induction, we have shown that Condition~\ref{condition:active_set}-\ref{condition:d_separation}, Condition~\ref{condition:active_set}-\ref{condition:no_directed_path}, and Condition~\ref{condition:active_set}-\ref{condition:d_separation} hold at the end of each iteration.
    Given the analysis above, at each iteration, we update variables whenever all their children and their coparents' children are active, i.e., Case~\ref{cases:one_iter}-\ref{case:pure_all} and Case~\ref{cases:one_iter}-\ref{case:hybrid_all}, and the causal relations between the parents and the children are encoded in $\textbf{JointP}$.
    
    We further argue that at every iteration, there exists at least one undiscovered latent variable that has all its children in $\mA$.
    We focus on the trimmed graph of $\mZ_{\mA}$ and $\mA$, where $\mZ_{\mA}$ refers to all undiscovered variables.
    As the graph is of finite size and all variables in $\mZ_{\mA}$ have directed paths ending at active variables in $\mA$, the active variable $\aa$ farthest from the root does not have siblings in $\mZ_{\mA}$; otherwise, there would be $\aa'$ (i.e., the child of $\aa$'s sibling in $\mZ_{\mA}$) in $\mA$ that was further from the root than $\aa$.

    In sum, we have shown that there will exist at least one update at each iteration, and each update will lead to the discovery of new latent variables and correct causal structures.
    Therefore, Algorithm~\ref{alg:search_algo} can successfully identify the underlying hierarchical causal graph and each latent variable up to one-to-one mappings.

\end{proof}

\subsection{Illustration of a Specific Case}

\begin{figure}
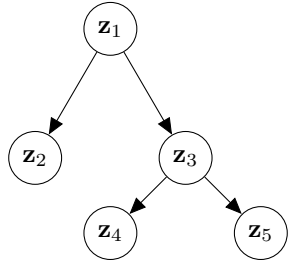
 
    \centering
    \tikz{
        \node[latent] (z1) {$\zz_{1}$};%
        \node[latent,below=1cm of z1,xshift=-1cm] (z2) {$\zz_{2}$};%
        \node[latent,below=1cm of z1,xshift=1cm] (z3) {$\zz_{3}$};%
        \node[latent,below=2cm of z1,xshift=0cm] (z4) {$\zz_{4}$};%
        \node[latent,below=2cm of z1,xshift=2cm] (z5) {$\zz_{5}$};%
        \edge {z1} {z2}
        \edge {z1} {z3}
        \edge {z3} {z5}
        \edge {z3} {z4}
}
    \caption{\small A specific case to demonstrate how we can avoid assuming specific functional classes.}
    \label{fig:corner_case}
\end{figure}

Figure~\ref{fig:corner_case} shows a specific case to demonstrate how we can avoid assuming specific functional classes.
For active set $\mA = \{ \zz_{2}, \zz_{4}, \zz_{5}\}$, if we treat $\zz_{2}$ as $\vv_{1}$, $\zz_{1}$ will become $\zz$ in the basis and $[\zz_{4}, \zz_{5}]$ will become $\vv_{2}$.
This set of $(\zz, \vv_{1}, \vv_{2})$ satisfies Assumption~\ref{eq:basis_model}-\ref{asmp:invertibility}, as all information about $\zz_{1}$ is contained by $\zz_{2}$, $\zz_{4}$, and $\zz_{5}$.
Assumption~\ref{eq:basis_model}-\ref{asmp:connectivity} and Assumption~\ref{eq:basis_model}-\ref{asmp:span} are direct consequences of Assumption~\ref{asmp:global_iden}-\ref{asmp:global_connectivity} and Assumption~\ref{asmp:global_iden}-\ref{asmp:global_span}.
Also, we can always find an independent variable $\ss_{1}:= \bm\varepsilon_{z_{2}}$ in the basis model.
Therefore, Theorem~\ref{thm:element_search_theory} guarantees the estimated $\hat{\zz}$ to be a one-to-one mapping of $\zz_{1}$.

\section{Relaxation of Structural Conditions} \label{sec:relaxation}

\subsection{Relaxation of Condition~\ref{condition:structural_conditions}-\ref{asmp:no_triangles}} \label{subsec:relax_triangles}

We remark that Condition~\ref{condition:structural_conditions}-\ref{asmp:no_triangles} is introduced to simplify the presentation of our main results -- as the first step of nonlinear-latent-hierarchical causal discovery, we wish to present the basic techniques more clearly – handling triangles involves additional components of the algorithm and could potentially compromise the readability.

We give an example in Figure~\ref{fig:triangles} with a triangle structure to illustrate how our theory and algorithm can handle triangles.
We refer to variables in a triangle under the following convention: 1) root: the variable which is the cause of the other two variables (e.g., $\zz_{1}$ in Figure~\ref{fig:triangles}), 2) sink: the variable which is the effect of the other two variables (e.g., $\zz_{3}$ in Figure~\ref{fig:triangles}), and 3) bridge: the variable which is the effect of the root variable and the cause of the sink variables (e.g., $\zz_{2}$ in Figure~\ref{fig:triangles}).

In the following, we illustrate the procedures to recognize and handle triangle structure by navigating through Figure~\ref{fig:triangles}.
First, the directed path detection procedure portrayed in Section~\ref{subsec:global_identifiability} can pinpoint the directed path between the bridge variable and the sink variable (e.g., $\zz_{2}$ and $\zz_{3}$).
We need to determine whether the two variables indeed belong to a triangle.
To this end, we can \textit{lump} the bridge variable and the sink variable together as $\vv_{1}$. 
We perform the basis model estimation over $\vv_{1}$ and the active set $\mA$ excluding $\vv_{1}$'s children.
In Figure~\ref{fig:triangles}, we perform the basis model over $\vv_{1} = [\zz_{2}, \zz_{3}]$ and $\mA = \{ \zz_{2}, \zz_{3}, \xx_{7}, \xx_{8}, \xx_{9} \}$, which returns an estimate of $\zz_{1}$.
Further, we can determine that the bridge variable $\zz_{2}$ and the estimated variable from the lumped $\vv_{1}$ constitute the parents of the sink variable $\zz_{3}$ by mutual prediction, which ascertains the two variables are indeed the bridge variable and the sink variable of a triangle, respectively.
In this example, we find that the estimated variables of $\zz_{2}$ and $\zz_{1}$ together contain identical information to the estimated variable of $\zz_{3}$, which reveals that $\zz_{2}$ and $\zz_{3}$ are the bridge variable and the sink variable of a triangle.
This concludes the triangle detection procedure.
With this knowledge, we can handle the triangle structure by lumping the bridge variable $\zz_{3}$ and the sink variable $\zz_{3}$ into a super-variable and proceed with this modification.

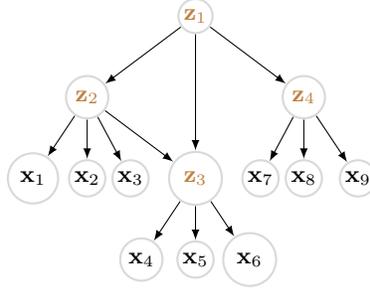
\begin{figure}
    \centering
    \begin{tikzpicture}[scale=0.72, line width=0.4pt, inner sep=0.2mm, shorten >=.1pt, shorten <=.1pt]
        \draw (0, 2.5) node[circle, draw=gray!30, thick, inner sep=0pt, minimum size=4pt](z1)  {{\footnotesize\lacl\,{$\zz_1$}\,}};
        \draw (-2, 1) node[circle, draw=gray!30, thick, inner sep=0pt, minimum size=16pt](z2) {{\footnotesize\lacl\,{$\zz_2$}\,}};
        \draw (2, 1) node[circle, draw=gray!30, thick, inner sep=0pt, minimum size=16pt](z4) {{\footnotesize\lacl\,{$\zz_{4}$}\,}};
        
        \draw (-3, -0.5) node[circle, draw=gray!30, thick, inner sep=0pt, minimum size=19pt](x1)  {{\footnotesize\,$\xx_1$\,}};
        \draw (-2, -0.5) node[circle, draw=gray!30, thick, inner sep=0pt, minimum size=6pt](x2)  {{\footnotesize\,$\xx_2$\,}};
        \draw (-1.2, -0.5) node[circle, draw=gray!30, thick, inner sep=0pt, minimum size=12pt](x3)  {{\footnotesize\,$\xx_3$\,}};
        \draw (0, -0.5) node[circle, draw=gray!30, thick, inner sep=0pt, minimum size=20pt](z3)  {{\footnotesize\lacl\,$\zz_3$\,}};

        \draw (1.2, -0.5) node[circle, draw=gray!30, thick, inner sep=0pt, minimum size=6pt](x7)  {{\footnotesize\,$\xx_7$\,}};
        \draw (2, -0.5) node[circle, draw=gray!30, thick, inner sep=0pt, minimum size=3pt](x8)  {{\footnotesize\,$\xx_8$\,}};
        \draw (3, -0.5) node[circle, draw=gray!30, thick, inner sep=0pt, minimum size=5pt](x9)  {{\footnotesize\,$\xx_9$\,}};
        
        \draw (-1, -2) node[circle, draw=gray!30, thick, inner sep=0pt, minimum size=16pt](x10)  {{\footnotesize\,$\xx_{4}$\,}};
        \draw (0, -2) node[circle, draw=gray!30, thick, inner sep=0pt, minimum size=6pt](x11)  {{\footnotesize\,$\xx_{5}$\,}};
        \draw (1, -2) node[circle, draw=gray!30, thick, inner sep=0pt, minimum size=20pt](x12)  {{\footnotesize\,$\xx_{6}$\,}};

        
        
       \draw[-latex] (z1) -- (z2);
       \draw[-latex] (z1) -- (z4);
       \draw[-latex] (z1) -- (z3);

       \draw[-latex] (z2) -- (x1);
       \draw[-latex] (z2) -- (x2);
       \draw[-latex] (z2) -- (x3);
       \draw[-latex] (z2) -- (z3);


       \draw[-latex] (z3) -- (x10);
       \draw[-latex] (z3) -- (x11);
       \draw[-latex] (z3) -- (x12);

       \draw[-latex] (z4) -- (x7);
       \draw[-latex] (z4) -- (x8);
       \draw[-latex] (z4) -- (x9);
       
    \end{tikzpicture}
    \caption{An example of triangle structures, i.e., $\zz_{1}$, $\zz_{2}$, and $\zz_{3}$.}
    \label{fig:triangles}
\end{figure}

\subsection{Admitting Non-leaf variables into $\mX$} \label{subsec:relax_noleaf}
Identical to the directed path detection detailed in Section~\ref{subsec:global_identifiability}, Corollary~\ref{corollary:test_criterion} can detect non-leaf observed variables and exclude them from the initial active set $\mX$ in Algorithm~\ref{alg:search_algo}. 
Thus, the problem can be reduced back to the case without observed non-leaf variables, which can be tackled by Algorithm~\ref{alg:search_algo}.

\section{Algorithm~\ref{alg:search_algo} Complexity} \label{app:complexity}
The dominant complexity term in our proposed algorithm is deep learning training, which is of a complexity $\cO(R * n)$ where $R$ is the number of levels in the hierarchical model, and $n$ is the number of observed variables. 
In contrast, recent works on linear hierarchical models provide algorithms with complexity $\cO( R * (1 + n)^{p+1} )$~\citep{huang2022latent} and 
$ \cO( R * n! ) $ \citep{xie2022identification} where $p$ is the largest number of variables that share children. 
Our algorithm achieves the lowest complexity in terms of the algorithmic complexity. As we allow for multi-dimensional variables, the dimensionality of each variable also contributes to the overall wall-clock time through the deep learning model dimension, which is a hyperparameter for the experimenter.

\section{Experiments} \label{app:experiments}

\subsection{Additional Experimental Details}

We provide additional details of experiments in Section~\ref{sec:experiments}.
For the basis model evaluation in Section~\ref{subsec:basis_expepriments}, then encoder $\hat{f}$ and decoders $ (\hat{g}_{1}, \hat{g}_{2} )$ are $4$-layer MLP's with a hidden dimension $30$ times as large as the input data and Leaky-ReLU ($\alpha=0.2$) activation functions.
For the synthetic hierarchy experiments in Section~\ref{subsec:hierarchy_experiments}, the model layers are $8$ and $50$ times as large as the input data.
For the personality dataset in Section~\ref{subsec:hierarchy_experiments}, the model layers are $4$ and $8$ times as large as the input data.
For the multi-view digit dataset in Section~\ref{subsec:hierarchy_experiments}, the model layers are $4$, and $4$ times (encoder) and $2$ times (decoder) as large as the input data
The model configurations are summarized in Table~\ref{tab:model_configurations}.

\begin{table}[ht]
    \centering
    \resizebox{\textwidth}{!}{
    \begin{tabular}{c|c c c c}
        \hline
        & \# encoder layers & \# decoder layers & encoder widths & decoder width  \\
        \hline
         Basis models (Section~\ref{subsec:basis_expepriments}) & 4 & 4  & $30 \times$ & $30 \times$\\
         Synthetic hierarchy (Section~\ref{subsec:hierarchy_experiments}) & 8 & 8 & $50 \times$& $50 \times$ \\
         Personality dataset (Section~\ref{subsec:hierarchy_experiments}) & 4 & 4 & $8 \times$ & $8 \times$ \\
         Digit dataset (Section~\ref{subsec:hierarchy_experiments}) & 4 & 4 & $4\times$ & $2\times$ \\
         \hline
    \end{tabular}
    }
    \caption{\textbf{Estimation model configuration for each experiment.}}
    \label{tab:model_configurations}
\end{table}
We apply Adam to train each model for $20,000$ steps with a learning rate of $1e-3$.
For hierarchical models, we consistently use $2$-layer MLPs for generating functions and set the exogenous variable dimensionality as $2$.
We evaluate the $R^{2}$ score with kernel regression over $8192$ samples for each estimated variable pair. Figure 6.b, The matched pairs give significantly higher scores than the unmatched ones (e.g., Table~\ref{tab:pairwise_predictions_v_structure} and Figure~\ref{tab:pairwise_predictions_digits_c_r}). 
For instance, in the first row of Table 2, the estimated latent variable with 
This gap allows us to set a threshold for each experiment to distinguish the matched estimated variables (e.g., a threshold of $0.6$ suffices for the experiment in Table~\ref{tab:pairwise_predictions_v_structure}).

\subsection{Additional Results on the Basis Model} \label{subsec:additional_basis}

Figure~\ref{fig:synthetic_scatterplots} visualizes the dependence between the estimated and true components.
We can observe that the $\zz$ is highly correlated with $\hat{\zz}$ whereas $\ss_{1}$ has little correlation with $\hat{\zz}$, which verifies Theorem~\ref{thm:sparsity} that we can successfully identify $\zz$.

\begin{figure}[h]
    \centering
    \setlength{\belowcaptionskip}{-5pt}
    \includegraphics[width=.6\columnwidth]{./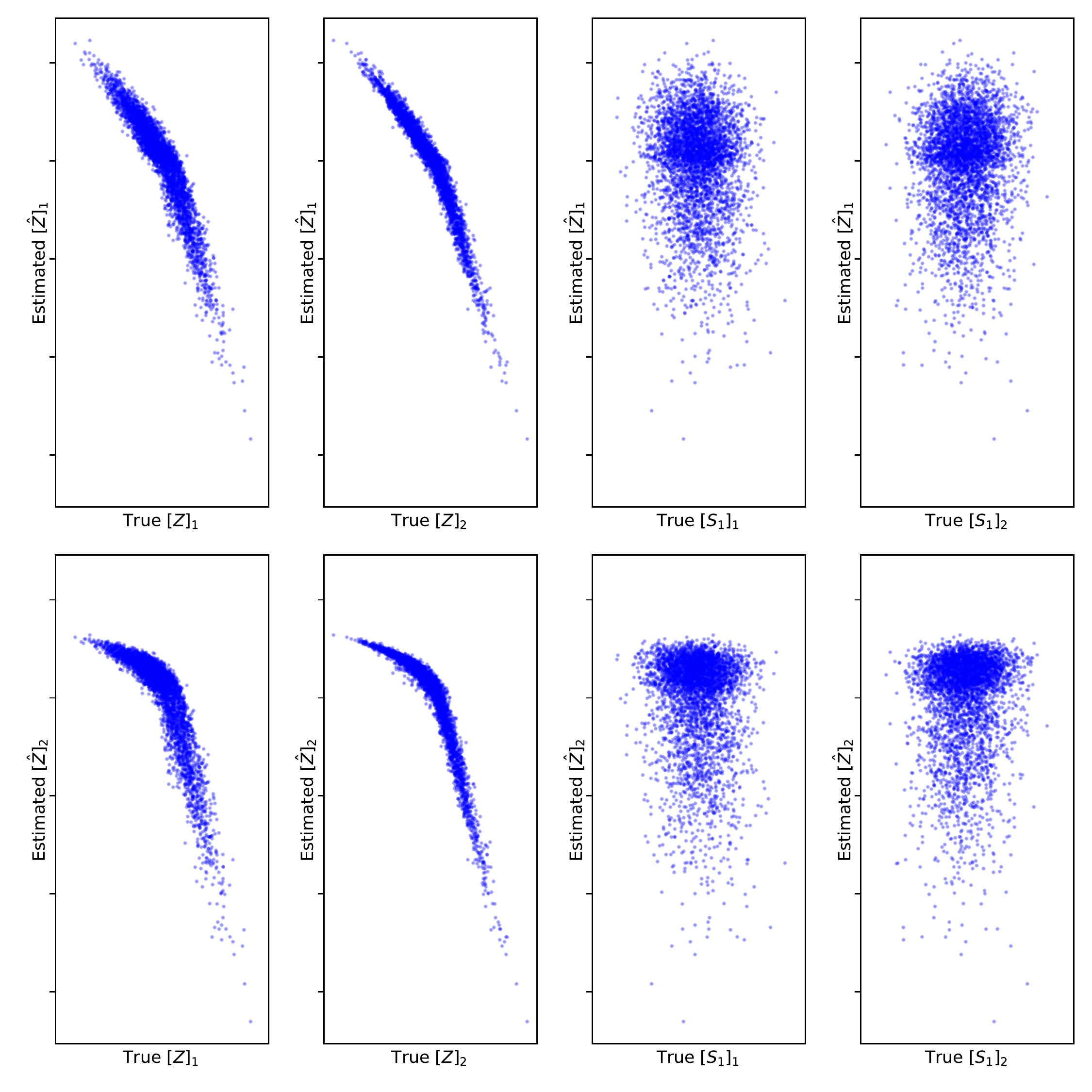}
    \caption{
       \textbf{The scatter plot between the estimated $\hat{\zz}$ and the true $(\zz, \ss_{1})$.} 
       The scatter points are obtained in an estimation run with $d_{z}=d_{s_{1}}=d_{s_{2}}=2 $.
       We can observe that the estimated components of $\hat{\zz}$ are highly correlated with those of $\zz$ and have little correlation with those $\ss_{1}$, empirically verifying Theorem~\ref{thm:sparsity}. 
   }
   \label{fig:synthetic_scatterplots}
\end{figure}

Although our theory only concerns asymptotic properties, our approach can perform stably at relatively low-sample regimes. As shown in Figure~\ref{fig:sample_size}, our algorithm can achieve decent identification scores when the sample number exceeds $5000$. The performance peaks with little variance when the sample size is over $10000$.

\begin{figure}[h]
    \centering
    \includegraphics[width=.6\textwidth]{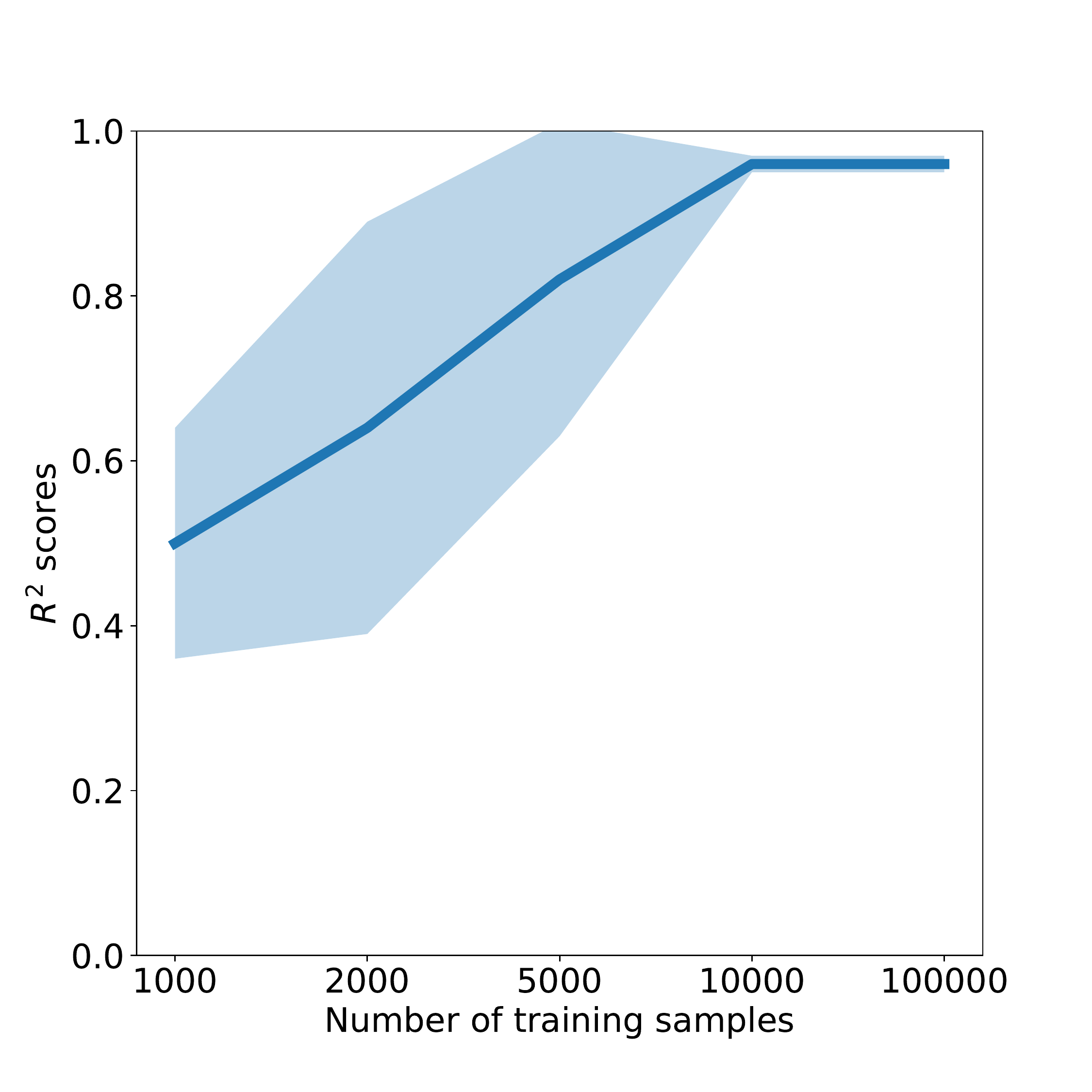}
    \caption{\textbf{$R^{2}$ scores for basis model identification.} The true basis model (Figure 2 in the manuscript) has latent dimensionality $d_{z} = d_{s_{1}} = d_{s_{2}} = 2$. The basis model training and data generation follow the training configuration in the main text. 
    We vary the number of training samples and measure the $R^{2}$ score to assess the identification result.
    Each result is averaged over $3$ random seeds.}
    \label{fig:sample_size}
\end{figure}

\subsection{Additional Results on Synthetic Hierarchical Models.} \label{app:additional_hierarchy}

Table~\ref{tab:pairwise_predictions_tree} and Table~\ref{tab:pairwise_predictions_unbalanced_tree} contain the mutual prediction scores for the hierarchical structures in Figure~\ref{subfig:tree} and Figure~\ref{subfig:unbalanced_tree}.
We can observe that the estimated variables that yield high mutual prediction scores are indeed under the same causal cluster, which gives signals for structure learning.

\begin{table}
    \centering
    \resizebox{0.6\columnwidth}{!}{%
    \begin{tabular}{ c | c c c c }
        \hline
        & $\xx_{1}$ & $\xx_{2}$ & $\xx_{3}$ & $\xx_{4}$ \\
        \hline
        $\xx_{1}$ & $\times$ & $\mathbf{0.83}\pm0.03$  & $0.57\pm0.04$ & $0.50\pm0.08$\\
        $\xx_{2}$ & $\mathbf{0.86} \pm0.01$ & $\times$ & $0.58\pm0.03$ & $0.52\pm0.04$\\
        $\xx_{3}$ & $0.6\pm0.02$ & $0.59\pm0.05$ & $\times$ & $\mathbf{0.77}\pm0.00$\\
        $\xx_{4}$ & $0.59\pm0.02$ & $0.59\pm0.05$ & $\mathbf{0.81}\pm0.04$ & $\times$ \\
        \hline
    \end{tabular}
    }
    \caption{\small
        \textbf{Pairwise predictions among estimates in Figure~\ref{subfig:tree}}. Each box $(\aa, \bb)$ shows the $R^{2}$ score obtained by applying the estimate produced by treating $\aa$ as $\vv_{1}$ to predict the estimate produced by treating $\bb$ as $\vv_{1}$.
        We can observe that the prediction scores within sibling pairs $(\xx_{1}, \xx_{2})$ and $ (\xx_{3}, \xx_{4}) $ are noticeably higher than other pairs, showing a decent structure estimation.
    }
    \label{tab:pairwise_predictions_tree}
\end{table}

\begin{table}
    \centering
    \resizebox{0.6\columnwidth}{!}{%
    \begin{tabular}{ c | c c c c}
        \hline
        & $\zz_{4}$ & $\xx_{3}$ & $\xx_{4}$ & $\xx_{5}$\\
        \hline
        $\zz_{4}$ & $\times$ & $\mathbf{0.81}\pm0.011$ & $0.48 \pm 0.006$ & $0.52\pm0.000$ \\
        $\xx_{3}$ & $\mathbf{0.87}\pm0.002$ & $\times$ & $0.47\pm0.003$ & $0.54\pm0.002$ \\
        $\xx_{4}$ & $0.57\pm0.000$ & $0.52\pm0.006$ & $\times$ & $ \mathbf{0.77} \pm 0.005 $ \\
        $\xx_{5}$ & $0.58\pm0.001$ & $0.57\pm0.015$ & $\mathbf{0.81}\pm0.000$ & $\times$  \\
        \hline
    \end{tabular}
    }
    \caption{
       \textbf{Pairwise predictions among estimated variables in Figure~\ref{subfig:unbalanced_tree}}.
       The pattern is consistent with Table~\ref{tab:pairwise_predictions_tree} and Table~\ref{tab:pairwise_predictions_v_structure}.
    }
    \label{tab:pairwise_predictions_unbalanced_tree}
\end{table}

\subsection{Additional Results on the Personality Dataset} \label{app:additional_personality}

We randomly and evenly partition each set of six questions into two variables.
Following this procedure, we have observed ten variables $\cc_{1}$,  $\cc_{2}$, $\aa_{1}$, $\aa_{2}$, $\nn_{1}$, $\nn_{2}$, $\ee_{1}$, $\ee_{2}$, $\oo_{1}$, and $\oo_{2}$, each of 3-dimension. 
The Letter in each variable name indicates the attribute to which the variable belongs, to facilitate readability.
Figure~\ref{fig:personality_multivariable} contains the results.
Analogous to Figure~\ref{fig:personality_monovariale}, the learned structure faithfully reflects the questionnaire structure -- questions designed for the same personality trait are clustered together.

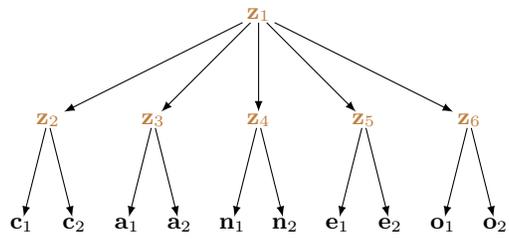
\begin{figure}[t]
    \centering
        \begin{tikzpicture}[scale=.7, line width=0.4pt, inner sep=0.1mm, shorten >=.1pt, shorten <=.1pt]
            \draw (0, 2) node(z1)  {{\footnotesize\lacl\,{$\zz_1$}\,}};

            \draw (-4, 0) node(z2)  {{\footnotesize\lacl\,$\zz_2$\,}};
            \draw (-2, 0) node(z3)  {{\footnotesize\lacl\,$\zz_{3}$\,}};
            \draw (0, 0) node(z4)  {{\footnotesize\lacl\,$\zz_{4}$\,}};
            \draw (2, 0) node(z5)  {{\footnotesize\lacl\,$\zz_{5}$\,}};
            \draw (4, 0) node(z6)  {{\footnotesize\lacl\,$\zz_{6}$\,}};

            \draw (-4.5, -2) node(c1)  {{\footnotesize\,$\cc_1$\,}};
            \draw (-3.5, -2) node(c2)  {{\footnotesize\,$\cc_2$\,}};
            \draw (-2.5, -2) node(a1)  {{\footnotesize\,$\aa_1$\,}};
            \draw (-1.5, -2) node(a2)  {{\footnotesize\,$\aa_2$\,}};
            \draw (-.5, -2) node(n1)  {{\footnotesize\,$\nn_1$\,}};
            \draw (.5, -2) node(n2)  {{\footnotesize\,$\nn_2$\,}};

            \draw (1.5, -2) node(e1)  {{\footnotesize\,$\ee_1$\,}};
            \draw (2.5, -2) node(e2)  {{\footnotesize\,$\ee_2$\,}};
            \draw (3.5, -2) node(o1)  {{\footnotesize\,$\oo_1$\,}};
            \draw (4.5, -2) node(o2)  {{\footnotesize\,$\oo_2$\,}};

           \draw[-latex] (z2) -- (c1);
           \draw[-latex] (z2) -- (c2);
           \draw[-latex] (z3) -- (a1);
           \draw[-latex] (z3) -- (a2);
           \draw[-latex] (z4) -- (n1);
           \draw[-latex] (z4) -- (n2);
            \draw[-latex] (z5) -- (e1);
           \draw[-latex] (z5) -- (e2);
            \draw[-latex] (z6) -- (o1);
           \draw[-latex] (z6) -- (o2);
           \draw[-latex] (z1) -- (z2);
           \draw[-latex] (z1) -- (z3);
           \draw[-latex] (z1) -- (z4);
           \draw[-latex] (z1) -- (z5);
           \draw[-latex] (z1) -- (z6);

        \end{tikzpicture}
        \caption{The casual graph learned by our method, where each observed variable ($\cc$, $\aa$, $\nn$, $\ee$, $\oo$) is of three dimensions.}
    \label{fig:personality_multivariable}
\end{figure}


\end{document}